\documentclass{article}

\usepackage[preprint]{neurips_2026}

\usepackage[utf8]{inputenc}
\usepackage[T1]{fontenc}
\usepackage{amsmath,amssymb,amsthm,mathtools}
\usepackage{bm}
\usepackage{enumitem}
\usepackage{graphicx}
\usepackage{booktabs}
\usepackage{microtype}
\usepackage{placeins}
\usepackage{url}
\usepackage{hyperref}
\hypersetup{
  hidelinks,
  pdftitle={Exact Attention Sensitivity and the Geometry of Transformer Stability},
  pdfauthor={Seyed Morteza Emadi}
}

\graphicspath{{figures/}}

\newtheorem{theorem}{Theorem}[section]
\newtheorem{lemma}[theorem]{Lemma}
\newtheorem{corollary}[theorem]{Corollary}
\newtheorem{definition}[theorem]{Definition}
\newtheorem{remark}[theorem]{Remark}

\newcommand{\R}{\mathbb{R}}
\newcommand{\thetacert}{\theta_{\mathrm{cert}}}
\DeclareMathOperator{\softmax}{softmax}
\DeclareMathOperator{\Diag}{Diag}
\DeclareMathOperator{\MHA}{MHA}
\DeclareMathOperator{\LN}{LN}
\DeclareMathOperator{\FFN}{FFN}

\title{Exact Attention Sensitivity and the Geometry of Transformer Stability}

\author{%
  Seyed Morteza Emadi \\
  UNC-Chapel Hill, Kenan-Flagler Business School \\
  Chapel Hill, NC, USA \\
  \texttt{seyed\_emadi@kenan-flagler.unc.edu} \\
}

\begin{document}

\maketitle

\begin{abstract}
We develop a sensitivity analysis for transformer attention in a geometry aligned with tokenwise computation. Our main result is the exact identity
$\|J_\tau(u)\|_{\infty\to1}=\theta(p)/\tau$ for the Jacobian $J_\tau(u)$ of the tempered softmax $u\mapsto\softmax(u/\tau)$, where $\theta(p)=4\max_{S\subseteq[L]}p(S)(1-p(S))$ measures how evenly the attention distribution can be bisected rather than how concentrated it is. We combine this identity with a block-$\infty$/RMS norm under which row-stochastic attention mixing is nonexpansive. This yields a distribution-aware local Jacobian bound for multi-head attention and a sequence-length-independent Lipschitz bound on bounded input sets, with explicit dependence on width, input magnitude, temperature, and projection norms. We also identify a structural distinction between normalization placements: a pre-LN residual-sublayer Jacobian contains an additive identity term, whereas a post-LN residual-sublayer Jacobian does not. A LayerNorm projection lemma gives a sufficient condition under which the LayerNorm-only term in the post-LN expansion contracts geometrically; the condition is not tested by our experiments. Across three Pre-LN early-training runs of $774$M-parameter models, attention becomes substantially more concentrated while the median lower-bound certificate for $\theta(p)$ remains near one at every sampled layer and checkpoint. This certifies near-maximal exact sensitivity for at least half of the sampled rows within each layer. A minority of rows enters a dominant-atom regime with lower exact sensitivity, consistent with the deterministic relationship between $p_{\max}$ and $\theta(p)$.
\end{abstract}

\section{Introduction}
\label{sec:intro}

Transformers \citep{vaswani2017attention,touvron2023llama} can be difficult to train reliably at scale: optimization may exhibit loss spikes, gradient explosions, or divergence. Practitioners mitigate these failures through normalization placement \citep{xiong2020layer}, residual and initialization scaling \citep{wang2022deepnet}, and learning-rate schedules \citep{goyal2017accurate}, yet the mechanisms behind these choices remain incompletely understood. Here we focus on three connected questions. What is the exact sensitivity of softmax in a perturbation geometry matched to attention? How does this local sensitivity propagate through a complete attention block? And what structural difference does normalization placement create in residual-sublayer Jacobians?

Several strands of theory address complementary parts of this problem. \emph{Global worst-case analyses} derive Lipschitz bounds in $\ell_2$ or Frobenius geometry \citep{kim2021lipschitz,dasoulas2021lipschitznorm,qi2023lipsformer}; under token-aggregated norms, these bounds can carry explicit sequence-length dependence. \emph{Local and distribution-aware analyses} \citep{yudin2025localLipschitz,castin2024smooth} refine these bounds using properties of the attention distribution or of a restricted input domain, but do not directly provide complete-block bounds that preserve the distinction between pre-LN and post-LN architectures. Generalization-theoretic work \citep{edelman2022inductive} uses a max-row norm inside covering-number arguments and combines $W^Q$ and $W^K$ into a single spectral factor. Our analysis instead retains $W^Q,W^K,W^V,W^O$ separately, which allows the value and attention-weight sensitivity pathways to be distinguished.

\paragraph{Our approach: architecture-aligned geometry.}
We develop a stability framework that respects the tokenwise structure of transformer computation. The key modeling choice is geometric: LayerNorm acts on each token independently, while attention outputs row-stochastic mixtures of value vectors. This motivates a norm that treats tokens separately and aggregates across tokens by a maximum rather than a Frobenius sum. We therefore work in a block-$\infty$/RMS geometry. In this geometry, attention mixing is automatically nonexpansive, LayerNorm has a bounded Lipschitz constant independent of input scale, and the resulting layerwise bounds contain no explicit sequence-length factor. From this geometric baseline, we derive an exact identity for the softmax Jacobian in $\ell_\infty\to\ell_1$ geometry and then propagate it through complete pre-LN and post-LN layers.

\paragraph{Contributions.}
\textbf{(1) Exact softmax sensitivity.} We derive the exact cross-norm identity $\|J_\tau(u)\|_{\infty\to 1}=\theta(p)/\tau$ for the tempered-softmax Jacobian, where the balanced-mass factor $\theta(p)$ measures bisectability rather than conventional concentration (Theorem~\ref{thm:softmax}). This result complements existing same-norm Lipschitz results, which concern different induced norms and are therefore not directly ordered with the present cross-norm identity.

\textbf{(2) Tokenwise local and bounded-domain attention bounds.} In a block-$\infty$/RMS geometry, row-stochastic attention mixing is nonexpansive. We use the exact softmax identity to derive a distribution-aware local Jacobian bound for MHA and, by taking $\theta\le 1$, a sequence-length-independent Lipschitz bound on bounded input sets (Theorem~\ref{thm:mha}, Corollary~\ref{cor:mha-bounded}). The decomposition separates a lower-degree value pathway from the softmax-mediated attention-weight pathway.

\textbf{(3) Residual-Jacobian structure under pre-LN and post-LN.} A pre-LN residual-sublayer Jacobian contains an additive identity term, whereas a post-LN residual-sublayer Jacobian does not (Theorems~\ref{thm:preln-gradient}--\ref{thm:postln-gradient}). We characterize the projection structure of the LayerNorm Jacobian and give a sufficient condition under which the LayerNorm-only term in the post-LN expansion contracts geometrically (the tokenwise condition of Lemma~\ref{lem:ln-projection} and its sequence-level lift in Corollary~\ref{cor:ln-sequence}). This condition is not empirically verified in our runs.

\textbf{(4) Sharpening versus bisectability.} In three Pre-LN early-training runs of $774$M-parameter GPT-2-style models on FineWeb, attention becomes substantially more concentrated (normalized entropy $0.987\!\to\!0.726$, mean row maximum weight $0.010\!\to\!0.174$) while the median lower-bound certificate for $\theta(p)$ remains near maximal at every sampled layer and checkpoint. A minority of rows develops substantially smaller certificates as their attention distributions become strongly concentrated, consistent with Corollary~\ref{cor:regimes} (sampled-row spread in Appendix~\ref{app:theta-traj}). Because the certificate lower-bounds the exact factor rowwise, a near-one per-layer median certifies near-maximal exact sensitivity for at least half of the sampled rows in that layer. Thus, in these runs, increased attention concentration coexists with near-maximal exact softmax sensitivity for at least half of the sampled rows at every sampled layer and checkpoint.

\section{Geometric Framework}
\label{sec:framework}

Analyzing transformer stability requires choosing a geometry matched to transformer operations. This section shows how token-aggregating norms can introduce explicit sequence-length dependence, then introduces the block-$\infty$/RMS norm used in the subsequent analysis.

\paragraph{Notation.}
We denote vectors $x \in \R^d$, matrices $X \in \R^{L \times d}$, and the probability simplex $\Delta^{L-1} = \{p \in \R^L : p_i \ge 0, \sum_i p_i = 1\}$. For $S \subseteq [L]$, we write $p(S) = \sum_{i \in S} p_i$.

The term ``stability'' is used in several non-equivalent senses in the transformer literature. To avoid ambiguity, Definition~\ref{def:stability} fixes the terminology used throughout the paper.

\begin{definition}[Stability concepts]
\label{def:stability}
For a transformer with hidden states $X_0,\ldots,X_N$ and layer maps $X_{\ell+1}=f_\ell(X_\ell)$, we distinguish:
\begin{enumerate}[leftmargin=*,itemsep=2pt]
\item \emph{Layerwise Lipschitz constant.} For a domain $\mathcal{D}$, $\operatorname{Lip}(f_\ell;\mathcal{D})=\sup_{X\ne X'\in\mathcal{D}}\|f_\ell(X)-f_\ell(X')\|/\|X-X'\|$; we write $L_{\mathrm{layer}}$ when the domain is clear. This measures worst-case sensitivity of a single layer on the stated domain.
\item \emph{End-to-end sensitivity.} $\|\partial X_N/\partial X_\ell\|$, the operator norm of the composed layer Jacobians from layer $\ell$ to the output. This measures how perturbations injected at layer $\ell$ amplify through the remaining depth.
\item \emph{Gradient-flow structure.} Backpropagation composes the transposed layer Jacobians $(\partial X_{\ell+1}/\partial X_\ell)^\top$, equivalently $(\partial X_N/\partial X_\ell)^\top$. We say a model has an \emph{additive identity gradient path} when the end-to-end Jacobian admits an additive identity term, i.e.\ $\prod_{k}(I+J_k)=I+\sum_k J_k+R_{\ge 2}$, where $R_{\ge 2}$ denotes the exact sum of all ordered products containing at least two sublayer Jacobians. The presence of the identity term is a structural statement about the expansion; it is not by itself a lower bound on the gradient norm, since the remaining terms may cancel part of the identity contribution.
\end{enumerate}
\end{definition}

The three notions are related but distinct: bounded $L_{\mathrm{layer}}$ does not imply bounded end-to-end sensitivity, and bounded end-to-end sensitivity in operator norm does not imply stable gradient flow on every direction (Lemma~\ref{lem:ln-projection} makes this precise for post-LN). Our results state which notion is being controlled in each case.

\subsection{Choosing a Geometry Aligned with Tokenwise Computation}
\label{subsec:why-norm}

The norm used in a stability analysis determines how perturbations are aggregated. A token-aggregating norm can introduce sequence-length dependence that is absent when sensitivity is measured at the level of the worst affected token. Two transformer operations motivate the geometry used here:

\emph{LayerNorm acts independently on each token.} The normalization formula $\mathrm{LN}(x_i)=\gamma\odot(x_i-\mu(x_i)\mathbf{1})/\sigma(x_i)+\beta$ depends only on the row $x_i$, never on $x_j$ for $j\ne i$. In particular, doubling a single token's magnitude leaves all other tokens' LayerNorm outputs unchanged.

\emph{Attention output is a row-stochastic mixture.} For each query position $i$, the output $(AV)_i=\sum_j A_{ij}v_j$ is a convex combination of value rows, so its magnitude is controlled by $\max_j\|v_j\|$, not by the aggregate $\sum_j\|v_j\|^2$.

A norm aligned with this structure should measure each token independently and aggregate by taking a maximum across tokens, not a sum or root-sum-of-squares. The block-$\infty$/RMS norm we introduce next does exactly this and is therefore aligned with the tokenwise structure above.

\paragraph{Length dependence is geometry-dependent.} The Frobenius norm $\|\Delta X\|_F=\sqrt{\sum_{i,j}\Delta X_{ij}^2}$ aggregates perturbations across all tokens and dimensions. For a row-stochastic matrix $A$,
\[
\|AX\|_F \le \|A\|_F\|X\|_F \le \sqrt{L}\,\|X\|_F.
\]
The block-$\infty$/RMS norm studies a different stability notion: the RMS magnitude of the worst affected token. For every $\Delta X\in\R^{L\times d}$,
\[
\frac{\|\Delta X\|_F}{\sqrt{Ld}} \;\le\; \|\Delta X\|_{\infty,\mathrm{rms}} \;\le\; \frac{\|\Delta X\|_F}{\sqrt{d}}.
\]
Thus transferring a bound between the two geometries can cost up to a factor $\sqrt{L}$. Our sequence-length-independent results should be interpreted as worst-token sensitivity statements, not as claims that every token-aggregated objective is independent of sequence length. Block-$\infty$/RMS bounds carry no explicit $L$ factor because attention mixing is nonexpansive in this geometry (Lemma~\ref{lem:stochastic}).

\subsection{Block-$\infty$/RMS Geometry}

\begin{definition}[Block-$\infty$/RMS norm]
\label{def:block-rms}
For $X = (x_1, \ldots, x_L)^\top \in \R^{L \times d}$ with rows $x_i\in\R^d$, $\|X\|_{\infty,\mathrm{rms}} = \max_i \|x_i\|_2/\sqrt{d}$.
\end{definition}

This is the maximum per-token RMS magnitude; the $\sqrt{d}$ normalization aligns with LayerNorm's standard deviation. For a single vector $x\in\R^d$ we write $\|x\|_{\mathrm{rms}}:=\|x\|_2/\sqrt{d}$, so that $\|X\|_{\infty,\mathrm{rms}}=\max_i\|x_i\|_{\mathrm{rms}}$. For a matrix $Z$ with arbitrary feature width (e.g.\ the $d_h$-dimensional per-head matrices $Q_h,K_h,V_h$), we also write the unnormalized row norm
\[
\|Z\|_{\infty,2} := \max_i \|z_i\|_2,
\]
so that for a model-width state $X\in\R^{L\times d}$, $\|X\|_{\infty,\mathrm{rms}}=\|X\|_{\infty,2}/\sqrt{d}$. Two facts about this geometry drive the layerwise bounds below (proofs in Appendix~\ref{app:geometry}).

\begin{lemma}[Attention mixing is nonexpansive]
\label{lem:stochastic}
For any row-stochastic $A\in\R^{L\times L}$ and $V\in\R^{L\times d}$, $\|AV\|_{\infty,\mathrm{rms}} \le \|V\|_{\infty,\mathrm{rms}}$.
\end{lemma}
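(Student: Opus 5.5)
The plan is to work row by row and use convexity of the Euclidean norm. Fix a query position $i$ and write $(AV)_i=\sum_{j=1}^L A_{ij}v_j$, where $v_j$ denotes the $j$-th row of $V$. Because $A$ is row-stochastic, the coefficients $A_{ij}$ are nonnegative and $\sum_j A_{ij}=1$. So $(AV)_i$ is a convex combination of the value rows, exactly as noted in Section~\ref{subsec:why-norm}.

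Next, apply the triangle inequality together with absolute homogeneity of $\|\cdot\|_2$, using $A_{ij}\ge 0$ so that $|A_{ij}|=A_{ij}$:
\[
\|(AV)_i\|_2 \le \sum_j A_{ij}\|v_j\|_2 \le \Bigl(\sum_j A_{ij}\Bigr)\max_j\|v_j\|_2 = \|V\|_{\infty,2}.
\]
Dividing by $\sqrt{d}$ gives $\|(AV)_i\|_{\mathrm{rms}}\le \|V\|_{\infty,\mathrm{rms}}$ for every $i$. Taking the maximum over $i$ and using Definition~\ref{def:block-rms} then yields the claim.

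No step is a real obstacle. The only point needing care is that both the nonnegativity and the unit row sums of $A$ are used: without nonnegativity the bound degrades to $\max_i\sum_j|A_{ij}|$, the $\ell_\infty$-induced norm of $A$. A remark could record this more general form, $\|AV\|_{\infty,\mathrm{rms}}\le \|A\|_{\infty\to\infty}\|V\|_{\infty,\mathrm{rms}}$, since it is what later bounds on Jacobians of the softmax weights will likely need. The same argument works with the unnormalized $\|\cdot\|_{\infty,2}$, so it also applies to the per-head matrices of width $d_h$.
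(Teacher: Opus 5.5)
Your proof is correct and matches the paper's argument essentially line for line. Like the paper, you write each output row as a convex combination of value rows, apply the triangle inequality using $A_{ij}\ge 0$ and $\sum_j A_{ij}=1$, then take the maximum over rows and divide by $\sqrt{d}$. Your side remark also checks out: without nonnegativity the factor becomes $\max_i\sum_j|A_{ij}|$, which is exactly the max-row-$\ell_1$ bound the paper later applies to $D A_h(U)[H]\,V_h$ in the proof of Theorem~\ref{thm:mha}.
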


\begin{lemma}[LayerNorm magnitude reset]
\label{lem:ln-output}
For any $x\in\R^d$, $\|\LN(x)\|_{\mathrm{rms}} \le \|\gamma\|_\infty + \|\beta\|_\infty$.
\end{lemma}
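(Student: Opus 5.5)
The statement to prove is Lemma~\ref{lem:ln-output}. The plan is to split the LayerNorm output into its centered-normalized part and the bias part, bound each in $\|\cdot\|_{\mathrm{rms}}$, and combine them with the triangle inequality. Write $\hat x=(x-\mu(x)\mathbf{1})/\sigma(x)$, so that $\LN(x)=\gamma\odot\hat x+\beta$. Since $\|\cdot\|_{\mathrm{rms}}=\|\cdot\|_2/\sqrt d$ is a norm, we get $\|\LN(x)\|_{\mathrm{rms}}\le\|\gamma\odot\hat x\|_{\mathrm{rms}}+\|\beta\|_{\mathrm{rms}}$. It then suffices to show $\|\gamma\odot\hat x\|_{\mathrm{rms}}\le\|\gamma\|_\infty$ and $\|\beta\|_{\mathrm{rms}}\le\|\beta\|_\infty$.

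The bias term is immediate: $\|\beta\|_2^2=\sum_k\beta_k^2\le d\|\beta\|_\infty^2$, which gives $\|\beta\|_{\mathrm{rms}}\le\|\beta\|_\infty$. For the gain term, first bound coordinatewise: $\|\gamma\odot\hat x\|_2^2=\sum_k\gamma_k^2\hat x_k^2\le\|\gamma\|_\infty^2\|\hat x\|_2^2$. The key fact is then the normalization identity $\|\hat x\|_2^2=d$, or $\|\hat x\|_{\mathrm{rms}}=1$. This holds because $\sigma(x)^2=\frac1d\sum_k(x_k-\mu(x))^2$, so $\sum_k\hat x_k^2=d\sigma^2/\sigma^2=d$. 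Together these give $\|\gamma\odot\hat x\|_{\mathrm{rms}}\le\|\gamma\|_\infty$.

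The only delicate step is the convention for $\sigma$, and this is where I expect the argument to need care. If LayerNorm uses a stabilizer, $\sigma(x)=\sqrt{\frac1d\sum_k(x_k-\mu)^2+\varepsilon}$ with $\varepsilon>0$, the identity becomes the inequality $\|\hat x\|_{\mathrm{rms}}\le1$. That still suffices, and it also covers constant inputs, where $\hat x=0$. If instead $\varepsilon=0$, we must restrict to non-constant $x$ so that $\LN$ is defined. I would state the lemma under the $\varepsilon\ge0$ convention, with $x$ non-constant when $\varepsilon=0$. I would also note that a population-variance normalization (dividing by $d$) is what makes the constant exactly $1$. With the unbiased $1/(d-1)$ convention one would instead get a factor $\sqrt{(d-1)/d}\le1$, so the bound is unaffected.
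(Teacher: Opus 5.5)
Your proposal is correct and follows the paper's proof: write $\LN(x)=\gamma\odot\hat x+\beta$, use $\|\hat x\|_2^2=d(\sigma(x)^2-\epsilon)/\sigma(x)^2\le d$, bound $\|\gamma\odot\hat x\|_2\le\|\gamma\|_\infty\|\hat x\|_2$ and $\|\beta\|_2\le\sqrt{d}\,\|\beta\|_\infty$, then apply the triangle inequality. The paper fixes the $\epsilon>0$ convention from the start, so your separate treatment of $\epsilon=0$ and of the $1/(d-1)$ normalization is harmless extra generality.
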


Lemma~\ref{lem:stochastic} shows that attention mixing does not increase the maximum per-token magnitude, while Lemma~\ref{lem:ln-output} shows that LayerNorm bounds the magnitude of each normalized token independently of the input scale. Appendix~\ref{app:geometry} also establishes $\mathrm{Lip}(\LN)\le \|\gamma\|_\infty/\sqrt{\epsilon}$.

\section{Exact Softmax Sensitivity}
\label{sec:softmax}

We characterize softmax sensitivity by the $\ell_\infty\to\ell_1$ operator norm of the Jacobian: $\ell_\infty$ on inputs measures the worst-case logit perturbation, while $\ell_1$ on outputs measures total redistributed probability mass.

\begin{definition}[Balanced-mass factor]
\label{def:theta}
For $p\in\Delta^{L-1}$, $\theta(p) = 4\,\max_{S\subseteq[L]} p(S)(1-p(S))\in[0,1]$, where $p(S)=\sum_{i\in S}p_i$.
\end{definition}

$\theta(p)$ measures how evenly probability mass can be \emph{bisected}: the product $p(S)(1-p(S))$ is maximized at $p(S)=1/2$, giving $\theta=1$, and becomes small only when every subset mass is far from $1/2$, with $\theta=0$ only in the exactly one-hot case. The quantity $\theta(p)$ is distinct from concentration measures: $p=(0.4,0.1,0.4,0.1)$ has lower entropy than uniform but $\theta=1$ because $p(\{2,3\})=0.5$. Thus a non-uniform attention distribution can still have maximal $\theta$.

To avoid ambiguity about where the derivative is taken, define the tempered softmax $s_\tau(u)=\softmax(u/\tau)$ and let $J_\tau(u)=D_u\,s_\tau(u)$ denote its Jacobian with respect to the \emph{unscaled} logits $u$.

\begin{theorem}[Exact softmax Jacobian norm]
\label{thm:softmax}
Let $u\in\R^L$, $\tau>0$, and $p = s_\tau(u)$. Then $J_\tau(u) = \frac{1}{\tau}(\Diag(p) - pp^\top)$ and
\[
\|J_\tau(u)\|_{\infty \to 1} \;=\; \frac{\theta(p)}{\tau}.
\]
The maximum is achieved by $x^*\in\{\pm 1\}^L$ with $x^*_i=+1$ for $i\in S^*$ and $-1$ otherwise, where $S^*$ attains the maximum in Definition~\ref{def:theta}.
\end{theorem}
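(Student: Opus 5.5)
The plan has four steps: compute the Jacobian, reduce the operator norm to sign vectors, evaluate the resulting quadratic expression, and identify the optimum.

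\textbf{Jacobian formula.} Write $s_\tau(u)=\softmax(v)$ with $v=u/\tau$. The standard softmax derivative $\partial p_i/\partial v_j = p_i(\delta_{ij}-p_j)$ gives $D_v\softmax(v)=\Diag(p)-pp^\top$. The chain rule with $D_u v = \tau^{-1}I$ then yields $J_\tau(u)=\tau^{-1}(\Diag(p)-pp^\top)$. The rest of the argument concerns the symmetric matrix $M=\Diag(p)-pp^\top$, and it suffices to show $\|M\|_{\infty\to1}=\theta(p)$.

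\textbf{Reduction to sign vectors.} The map $x\mapsto \|Mx\|_1$ is convex, so its maximum over the cube $\{\|x\|_\infty\le 1\}$ is attained at a vertex $x\in\{\pm1\}^L$. Each such $x$ corresponds to a subset $S=\{i: x_i=+1\}$, with $x=2\mathbf 1_S-\mathbf 1$. Since $M\mathbf 1=p-p=0$, we get $Mx=2M\mathbf 1_S$. Writing $q=p(S)$, the $i$-th component of $M\mathbf 1_S$ is $p_i(\mathbf 1[i\in S]-q)$.

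\textbf{Evaluating the norm.} Summing absolute values gives
\[
\|M\mathbf 1_S\|_1=\sum_{i\in S}p_i(1-q)+\sum_{i\notin S}p_i q = 2q(1-q).
\]
Hence $\|Mx\|_1=4p(S)(1-p(S))$. Maximizing over $S$ gives exactly $\theta(p)$, attained at $x^*=2\mathbf 1_{S^*}-\mathbf 1$. Dividing by $\tau$ yields the stated identity together with the maximizer claimed in the theorem.

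\textbf{Main obstacle.} The only conceptual step is the vertex reduction, i.e.\ justifying that the supremum of the convex function over the $\ell_\infty$ ball is attained at a sign vector. This follows because every point of the cube is a convex combination of vertices and a convex function is bounded on such a combination by its maximum over the vertices. After that, the explicit computation is immediate, because $M\mathbf 1=0$ makes the dependence on $x$ collapse to the single scalar $p(S)$. The range claim $\theta\in[0,1]$ is automatic, since $4q(1-q)\le1$.
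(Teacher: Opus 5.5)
Your proposal is correct and follows essentially the same route as the paper: reduce to sign vertices using convexity of $x\mapsto\|Jx\|_1$ on the $\ell_\infty$ cube, then evaluate $\|Jx\|_1=4p(S)(1-p(S))/\tau$ at each vertex and maximize over $S$. The only difference is cosmetic. You use $M\mathbf{1}=0$ to rewrite $Mx=2M\mathbf{1}_S$, whereas the paper computes $(Jx)_i=p_i(x_i-\alpha)/\tau$ directly with $\alpha=2p(S)-1$.
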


The proof in Appendix~\ref{app:softmax} reduces the operator norm to a maximum over sign vectors and identifies the maximizing subset. The resulting value depends exactly on the probability mass profile through $\theta(p)$. Section~\ref{sec:layers} then propagates this row-level identity through the separate MHA projections.

\begin{remark}[Combinatorial interpretation and computation]
\label{rem:maxcut}
The matrix $\Diag(p)-pp^\top$ is the graph Laplacian of the complete graph on $[L]$ with edge weights $w_{ij}=p_ip_j$ ($i\neq j$) for $i\neq j$. For this matrix the identification is self-contained: at a sign vector $x\in\{\pm1\}^L$ with positive set $S$, a direct computation gives $x^\top(\Diag(p)-pp^\top)x = 1-(2p(S)-1)^2 = 4\,p(S)(1-p(S)) = 4\operatorname{Cut}_w(S,\bar S)$, where $\operatorname{Cut}_w(S,\bar S)=\sum_{i\in S,\,j\notin S}p_ip_j = p(S)(1-p(S))$ is the weight of the cut $(S,\bar S)$. Combined with Theorem~\ref{thm:softmax}, whose maximizer is a sign vertex attaining $4\,p(S^*)(1-p(S^*))$, this shows that $\theta(p)$ is four times the maximum weighted cut value of this rank-one-weighted graph, equivalently the maximum of the quadratic cut objective over sign vectors. For attention, this identifies softmax sensitivity with the bisectability of the attention distribution.

The same interpretation also explains the computational difficulty of evaluating $\theta(p)$ exactly: maximizing the cut is equivalent to choosing a subset of probabilities whose mass is closest to $1/2$, a \textsc{Subset-Sum}/\textsc{Partition}-type optimization problem. Accordingly, our large-$L$ experiments report feasible-subset lower-bound certificates rather than exact values (Section~\ref{sec:experiments}).
\end{remark}

\begin{corollary}[Sensitivity regimes]
\label{cor:regimes}
(Proof in Appendix~\ref{app:softmax}.)
\begin{enumerate}[leftmargin=*,itemsep=2pt]
\item Uniform: $p = \frac{1}{L}\mathbf{1}$ gives $\theta(p) = 1$ for even $L$ and $\theta(p) = 1 - 1/L^2$ for odd $L$.
\item One-hot: $p = e_i$ gives $\theta(p) = 0$ (minimum sensitivity).
\item Dominant atom: If the top token has mass $1-\kappa$ with $\kappa \le 1/2$, then $\theta(p) = 4\kappa(1-\kappa)$.
\item Top-$k$ uniform: Mass $1/k$ on $k$ tokens gives $\theta(p) = 1 - (1 - 2\lfloor k/2 \rfloor/k)^2$.
\item Maximum-weight lower bound: For $p_{\max}:=\max_i p_i$, $\theta(p) \ge 1 - p_{\max}^2$.
\end{enumerate}
\end{corollary}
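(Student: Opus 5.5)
Every item reduces to the combinatorial description of $\theta$ from Definition~\ref{def:theta}. Since $t\mapsto 4t(1-t)$ is symmetric about $1/2$ and strictly decreasing in $|t-1/2|$, we have $\theta(p)=1-(2m-1)^2$, where $m$ is an achievable subset mass $p(S)$ closest to $1/2$. This is the identity $4p(S)(1-p(S))=1-(2p(S)-1)^2$ recorded in Remark~\ref{rem:maxcut}. The plan is therefore to identify, in each regime, the set of achievable masses $\{p(S)\}$ and the one nearest $1/2$. Theorem~\ref{thm:softmax} is not needed except to interpret $\theta$ as a sensitivity.

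\textbf{Items 1, 2, 4.} For top-$k$ uniform, the achievable masses are $j/k$ with $0\le j\le k$. The closest to $1/2$ is $j=\lfloor k/2\rfloor$, which gives $\theta=1-(1-2\lfloor k/2\rfloor/k)^2$. Item~1 is the case $k=L$: even $L$ gives $1$, and odd $L$ gives $1-(1/L)^2$. For one-hot, every subset mass lies in $\{0,1\}$, so $\theta=0$.

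\textbf{Item 3.} Let the top token have mass $1-\kappa\ge 1/2$. Any $S$ containing it has $p(S)\ge 1-\kappa$, and any $S$ avoiding it has $p(S)\le\kappa$. Both ranges lie at distance at least $1/2-\kappa$ from $1/2$, and the endpoints $\kappa$ and $1-\kappa$ are attained by the singleton and its complement. Hence $\theta=1-(1-2\kappa)^2=4\kappa(1-\kappa)$.

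\textbf{Item 5.} This is the only step needing an argument rather than enumeration. I would use a greedy/threshold construction. Order the atoms arbitrarily and take prefix sums $0=s_0\le s_1\le\dots\le s_L=1$. Some index has $s_{j}\le 1/2\le s_{j+1}$, and consecutive prefix sums differ by at most $p_{\max}$. So one of $s_j,s_{j+1}$ lies within $p_{\max}/2$ of $1/2$, giving $|2m-1|\le p_{\max}$ and $\theta\ge 1-p_{\max}^2$.

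The only subtlety is making the prefix-sum gap bound precise, including the edge case $p_{\max}>1/2$. In that case item~3 gives $\theta=4p_{\max}(1-p_{\max})$, and the bound still holds because $(1-p_{\max})(3p_{\max}-1)\ge 0$ whenever $p_{\max}\ge 1/3$.
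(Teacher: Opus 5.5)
Your proof is correct and follows the paper's argument essentially case by case. The minor differences are that the paper handles the uniform and one-hot cases directly rather than as instances of top-$k$, and that your separate treatment of $p_{\max}>1/2$ is unnecessary: the bracketing $s_j\le 1/2\le s_{j+1}$ with gap at most $p_{\max}$ already covers that case, exactly as in the paper, though your cross-check via item~3 and $(1-p_{\max})(3p_{\max}-1)\ge 0$ is also valid.
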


\paragraph{Implications for training.}
At random initialization, attention distributions are often near-uniform \citep{noci2022signal,naitsaada2025attention}. For rows with sufficiently large causal support, this places $\theta(p)$ near one and the softmax Jacobian norm near its maximum $1/\tau$. The earliest causal rows are an exception: the position-$0$ row has support one and therefore $\theta=0$.

As attention becomes more concentrated during training, however, $\theta(p)$ can remain large. A non-uniform distribution may still admit a subset with mass close to $1/2$, and the maximum-weight lower bound $\theta(p)\ge1-p_{\max}^2$ shows that moderate concentration alone cannot make $\theta$ small. Section~\ref{sec:experiments} measures this distinction directly.

We next propagate the exact row-level softmax sensitivity through multi-head attention and complete transformer layers.
\section{Layerwise Stability Analysis}
\label{sec:layers}

We next propagate the exact row-level softmax sensitivity into MHA and complete-layer bounds, then examine how residual-sublayer Jacobians compose through depth. Let $X_\ell\in\R^{L\times d}$ denote the hidden state at layer $\ell$ for $\ell = 0, 1, \ldots, N$, where $N$ is the total number of layers. The analysis makes explicit how $\theta(p)/\tau$, input magnitude, projection norms, and normalization placement enter these bounds. Proofs for this section appear in Appendix~\ref{app:layers}.

\subsection{Multi-Head Attention Local and Bounded-Domain Sensitivity}

Multi-head attention maps input $U \in \R^{L \times d}$ to output $\MHA(U) = [O_1 \| \cdots \| O_H] W^O$, where head $h$ computes:
\[
Q_h = UW_h^Q, \quad K_h = UW_h^K, \quad V_h = UW_h^V,
\]
\[
A_h = \softmax(Q_h K_h^\top / (\tau\sqrt{d_h})), \quad O_h = A_h V_h.
\]
In standard training $\tau = 1$, but we retain the explicit $\tau$ to make the temperature dependence of the softmax-sensitivity factor visible.

\paragraph{Architectural assumptions.}
The analysis assumes bias-free projections $W_h^Q,W_h^K,W_h^V,W^O$ and a dropout-disabled attention map: with attention dropout active, the applied mixing matrix is no longer row-stochastic and Lemma~\ref{lem:stochastic} would not apply as stated. Our experimental models satisfy the bias-free assumption. Attention-sensitivity measurements are taken in evaluation mode, whereas training losses and gradient norms are recorded from the training pass.

For a fixed causal or other fixed mask, Theorem~\ref{thm:softmax} applies rowwise on the unmasked support. At position $t$, the causal row is a softmax over its $t+1$ visible logits; masked coordinates have zero derivative, and $\theta$ is computed on the support distribution.

To bound sensitivity, we decompose the differential into two pathways. The \emph{value pathway} captures how input perturbations affect values $V$, which are then averaged by fixed attention weights. The \emph{attention-weight pathway} captures how input perturbations change attention weights $A$, which then act on fixed values. For $U\in\R^{L\times d}$, define the local induced Jacobian norm
\[
\Lambda_{\MHA}(U) := \|D\MHA(U)\|_{\infty,\mathrm{rms}\to\infty,\mathrm{rms}}.
\]

\begin{theorem}[MHA local Jacobian bound]
\label{thm:mha}
Let $\bar{B}_U = \|U\|_{\infty,\mathrm{rms}}\sqrt{d}$ and $\tilde{\theta}_h(U) = \max_i \theta(A_h(U)[i,:])$. Then
\[
\Lambda_{\MHA}(U) \le \|W^O\|_2 \sum_{h=1}^H \left(\|W_h^V\|_2 + \frac{\tilde{\theta}_h(U)}{\tau} \Phi_h\right),
\]
where $\Phi_h = \frac{2\bar{B}_U^2}{\sqrt{d_h}}\|W_h^Q\|_2\|W_h^K\|_2\|W_h^V\|_2$. The first term $\|W_h^V\|_2$ is the value pathway contribution; the second term $\frac{\tilde{\theta}_h(U)}{\tau}\Phi_h$ is the attention-weight pathway contribution.
\end{theorem}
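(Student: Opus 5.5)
We bound the differential directly. Fix $U$ and a perturbation $E\in\R^{L\times d}$, and write $dX$ for the directional derivative of a quantity $X$ along $E$. Input and output both have width $d$, so the $\sqrt{d}$ normalizations on the two sides cancel. It therefore suffices to show
\[
\|D\MHA(U)[E]\|_{\infty,2}\le \|W^O\|_2\sum_h\Big(\|W_h^V\|_2+\tfrac{\tilde\theta_h(U)}{\tau}\Phi_h\Big)\|E\|_{\infty,2},
\]
and we note that $\bar B_U=\|U\|_{\infty,2}$.

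\textbf{Output projection and heads.} By the product rule, $d\MHA=[dO_1\|\cdots\|dO_H]W^O$ with $dO_h=A_h\,dV_h+dA_h\,V_h$. For each row $i$ we have $\|(d\MHA)_i\|_2\le\|W^O\|_2\,\|[(dO_1)_i,\dots,(dO_H)_i]\|_2$. The $\ell_2$ norm of a concatenation is at most the sum of the blocks' norms, so this is at most $\|W^O\|_2\sum_h\|(dO_h)_i\|_2$. It remains to bound each head's two pathways rowwise, uniformly in $i$.

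\textbf{Value pathway.} Here $dV_h=EW_h^V$, so each row satisfies $\|(EW_h^V)_j\|_2\le\|W_h^V\|_2\|E\|_{\infty,2}$. Since $A_h$ is row-stochastic, Lemma~\ref{lem:stochastic} (applied in the $\|\cdot\|_{\infty,2}$ form, which is the same statement up to the constant $\sqrt{d_h}$) gives $\|A_h\,dV_h\|_{\infty,2}\le\|W_h^V\|_2\|E\|_{\infty,2}$. This is the first term.

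\textbf{Attention-weight pathway.} Row $i$ of $A_h$ equals $s_\tau(u_i)$, where $u_{ij}=q_i^\top k_j/\sqrt{d_h}$ are the unscaled logits; under a fixed mask we restrict to the visible support. Then $(dA_h)_i=J_\tau(u_i)\,du_i$. By the triangle inequality for a mixture, $\|(dA_h)_iV_h\|_2\le\|(dA_h)_i\|_1\max_j\|v_j\|_2$. Theorem~\ref{thm:softmax} gives $\|(dA_h)_i\|_1\le\frac{\theta(A_h[i,:])}{\tau}\|du_i\|_\infty\le\frac{\tilde\theta_h(U)}{\tau}\|du_i\|_\infty$. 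Next, $du_{ij}=(dq_i^\top k_j+q_i^\top dk_j)/\sqrt{d_h}$ with $dq_i=W_h^{Q\top}e_i$ and $dk_j=W_h^{K\top}e_j$. Cauchy--Schwarz then gives
\[
|du_{ij}|\le \frac{2}{\sqrt{d_h}}\|W_h^Q\|_2\|W_h^K\|_2\,\bar B_U\|E\|_{\infty,2}.
\]
Finally, $\max_j\|v_j\|_2\le\|W_h^V\|_2\bar B_U$. Multiplying these three factors produces exactly $\frac{\tilde\theta_h(U)}{\tau}\Phi_h\|E\|_{\infty,2}$.

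Summing the two pathways over heads and taking the maximum over rows $i$ yields the claim. The only delicate step is the attention-weight pathway. There one must apply the exact $\infty\to1$ identity to the derivative with respect to the \emph{unscaled} logits $u$ (the $1/\tau$ from the temperature is then already inside $J_\tau$, while the $1/\sqrt{d_h}$ stays in $u$). One must also pair the $\ell_1$ mass of $dA_i$ with the $\ell_\infty$-over-tokens bound on $V_h$, so that no factor of $L$ appears. For masked rows, $J_\tau$ vanishes on masked coordinates, so the same argument applies on the support with $\theta$ computed there. We do not use the fact that $J_\tau$ annihilates constant shifts; it could sharpen the bound but is not needed.
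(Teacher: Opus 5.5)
Your proposal is correct and follows the paper's proof essentially step for step. It uses the same split $dO_h = A_h\,dV_h + dA_h\,V_h$, Lemma~\ref{lem:stochastic} at head width for the value pathway, and the rowwise $\infty\to1$ identity of Theorem~\ref{thm:softmax} on the unscaled logits paired with the maximum row norm of $V_h$; the concatenation-then-$W^O$ step also matches. The only difference is cosmetic: you work rowwise in $\|\cdot\|_{\infty,2}$ and cancel the $\sqrt{d}$ factors at the outset, whereas the paper states the same estimates with maximum-entry and maximum-row-$\ell_1$ matrix norms and divides by $\sqrt{d}$ at the end.
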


\begin{corollary}[MHA Lipschitz bound on a bounded input set]
\label{cor:mha-bounded}
For $B\ge 0$, let $\mathcal{D}_B = \{U : \|U\|_{\infty,\mathrm{rms}} \le B\}$. Then $\operatorname{Lip}(\MHA;\mathcal{D}_B) \le L_{\MHA}^{(B)}$, where
\[
L_{\MHA}^{(B)} := \|W^O\|_2 \sum_{h=1}^H \left(\|W_h^V\|_2 + \frac{2dB^2}{\tau\sqrt{d_h}}\|W_h^Q\|_2\|W_h^K\|_2\|W_h^V\|_2\right).
\]
\end{corollary}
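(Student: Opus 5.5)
The plan is to derive Corollary~\ref{cor:mha-bounded} from Theorem~\ref{thm:mha} by integrating the local Jacobian bound along line segments. The domain $\mathcal{D}_B$ is convex: it is a ball of the norm $\|\cdot\|_{\infty,\mathrm{rms}}$. Take any $U,U'\in\mathcal{D}_B$ and set $U_t=U'+t(U-U')$ for $t\in[0,1]$. Convexity gives $U_t\in\mathcal{D}_B$, so $\|U_t\|_{\infty,\mathrm{rms}}\le B$ for all $t$. The map $\MHA$ is a composition of linear maps, matrix products and rowwise softmax, so it is $C^1$. The fundamental theorem of calculus then gives
\[
\MHA(U)-\MHA(U')=\int_0^1 D\MHA(U_t)[U-U']\,dt,
\]
and therefore
\[
\|\MHA(U)-\MHA(U')\|_{\infty,\mathrm{rms}}\le \Big(\sup_{t\in[0,1]}\Lambda_{\MHA}(U_t)\Big)\,\|U-U'\|_{\infty,\mathrm{rms}}.
\]

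It remains to bound $\Lambda_{\MHA}(V)$ uniformly over $V\in\mathcal{D}_B$. We use two facts.
\begin{enumerate}[leftmargin=*,itemsep=2pt]
\item Definition~\ref{def:theta} gives $\theta(p)\le 1$ for every $p$, since $4s(1-s)\le 1$. Hence $\tilde\theta_h(V)\le 1$. This also covers causal rows, where $\theta$ is computed on the support distribution.
\item By definition $\bar B_V=\|V\|_{\infty,\mathrm{rms}}\sqrt d\le B\sqrt d$, so $\bar B_V^2\le dB^2$. Thus $\Phi_h\le \frac{2dB^2}{\sqrt{d_h}}\|W_h^Q\|_2\|W_h^K\|_2\|W_h^V\|_2$.
\end{enumerate}
Substituting both into Theorem~\ref{thm:mha} gives $\Lambda_{\MHA}(V)\le L_{\MHA}^{(B)}$ for every $V\in\mathcal{D}_B$. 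Taking the supremum over $U\ne U'$ then yields $\operatorname{Lip}(\MHA;\mathcal{D}_B)\le L_{\MHA}^{(B)}$.

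The only step that needs care is the mean-value step, because $\|\cdot\|_{\infty,\mathrm{rms}}$ is not Euclidean. The integral form avoids the issue: the norm of a Bochner/Riemann integral is at most the integral of the norm, and this holds for any norm. The bound also relies on two facts already established: that $\Lambda_{\MHA}$ is the induced operator norm in exactly the geometry of $\mathcal{D}_B$, and that the bound of Theorem~\ref{thm:mha} is monotone in $\bar B_U$ and $\tilde\theta_h$.
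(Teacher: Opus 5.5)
Your proposal is correct and follows essentially the same route as the paper: convexity of $\mathcal{D}_B$, the integral form of the mean-value theorem along the segment $U_t$, and the substitutions $\tilde\theta_h\le 1$ and $\bar B_{U_t}^2\le dB^2$ into Theorem~\ref{thm:mha}. Your remarks that the integral inequality holds for any norm, and that the theorem's bound is monotone in $\bar B_U$ and $\tilde\theta_h$, make explicit what the paper leaves implicit.
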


The corollary follows from Theorem~\ref{thm:mha} by using $\tilde\theta_h\le 1$ and integrating the Jacobian along the line segment between two points of the convex set $\mathcal{D}_B$ (Appendix~\ref{app:layers}). The two bounds serve different purposes. The local result retains the distribution-dependent factor $\tilde\theta_h(U)$ because the attention distribution at $U$ is known. The bounded-domain result instead uses the uniform inequality $\theta\le1$, since attention distributions along the interpolation path are not known in advance.

The local and bounded-domain bounds contain no explicit sequence-length factor, a consequence of the tokenwise-aligned geometry in Section~\ref{subsec:why-norm}. They are not dimension-free: for block-RMS-bounded inputs with $B=O(1)$, the attention-weight coefficient contains $d/\sqrt{d_h}$ before accounting for the projection norms. The attention-weight coefficient is quadratic in the input magnitude $\bar{B}_U$ and contains the quartic projection product $\|W^O\|_2\|W^Q\|_2\|W^K\|_2\|W^V\|_2$. The complete MHA bound also contains the lower-degree value pathway through $W^V$ and $W^O$. Therefore the quartic attention-weight term alone does not imply a universal depth-scaling rule for the full MHA block.

\subsection{Per-Layer Lipschitz Bounds}

Both pre-LN ($X_{\ell+1}=X_\ell+\MHA(\LN(X_\ell))$) and post-LN ($X_{\ell+1}=\LN(X_\ell+\MHA(X_\ell))$) keep the MHA input magnitude bounded across layers. Pre-LN does so structurally, since the LayerNorm before MHA caps $\|\LN(X_\ell)\|_{\infty,\mathrm{rms}}\le \|\gamma\|_\infty+\|\beta\|_\infty$ by Lemma~\ref{lem:ln-output} regardless of how large the residual stream grows; post-LN does so because $X_\ell$ for $\ell\ge 1$ is itself a LayerNorm output. Because LayerNorm parameters may vary across layers, define the uniform constants
\[
B_{\LN} := \max_j\bigl(\|\gamma_j\|_\infty+\|\beta_j\|_\infty\bigr),
\qquad
L_{\LN} := \max_j\frac{\|\gamma_j\|_\infty}{\sqrt{\epsilon_j}},
\]
where the maxima run over the LayerNorm modules under consideration (Lemma~\ref{lem:ln-lip} bounds each module's Lipschitz constant by its own $\|\gamma_j\|_\infty/\sqrt{\epsilon_j}$). Let $L_{\MHA,\ell}^{(B_{\LN})}$ denote the bounded-domain constant of Corollary~\ref{cor:mha-bounded} evaluated with the layer-$\ell$ projection matrices and $B=B_{\LN}$, and let $L_{\FFN,\ell}$ denote the layer-$\ell$ FFN Lipschitz constant.

\begin{theorem}[Pre-LN per-layer bound]
\label{thm:preln-full}
For a complete pre-LN layer,
\[
L_{\mathrm{layer},\ell}^{\mathrm{pre}}\;\le\;\bigl(1+L_{\LN}\,L_{\MHA,\ell}^{(B_{\LN})}\bigr)\bigl(1+L_{\LN}\,L_{\FFN,\ell}\bigr).
\]
The bound has no explicit sequence-length dependence. If the relevant weight norms and LayerNorm parameters are uniformly bounded across layers, the right-hand side is uniform in $\ell$.
\end{theorem}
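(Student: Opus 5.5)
The plan is to view a complete pre-LN layer as the composition of two residual sublayers, $g_\ell(X)=X+\MHA_\ell(\LN_1(X))$ and $h_\ell(Y)=Y+\FFN_\ell(\LN_2(Y))$. Then $f_\ell=h_\ell\circ g_\ell$, and Lipschitz constants are submultiplicative under composition, so it suffices to prove $\operatorname{Lip}(g_\ell)\le 1+L_{\LN}L_{\MHA,\ell}^{(B_{\LN})}$ and $\operatorname{Lip}(h_\ell)\le 1+L_{\LN}L_{\FFN,\ell}$. All norms are taken in the block-$\infty$/RMS geometry of Definition~\ref{def:block-rms}, on the full space $\R^{L\times d}$. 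A Lipschitz bound on the whole space is needed because the residual stream itself is not bounded.

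For the attention sublayer, fix $X,X'$. The triangle inequality gives
\[
\|g_\ell(X)-g_\ell(X')\|_{\infty,\mathrm{rms}}\le\|X-X'\|_{\infty,\mathrm{rms}}+\|\MHA_\ell(\LN_1(X))-\MHA_\ell(\LN_1(X'))\|_{\infty,\mathrm{rms}}.
\]
LayerNorm acts tokenwise, so Lemma~\ref{lem:ln-output} places both $\LN_1(X)$ and $\LN_1(X')$ in $\mathcal{D}_{B_{\LN}}$, and this holds for arbitrary $X,X'$. Corollary~\ref{cor:mha-bounded}, applied with $B=B_{\LN}$, then bounds the second term by $L_{\MHA,\ell}^{(B_{\LN})}\|\LN_1(X)-\LN_1(X')\|_{\infty,\mathrm{rms}}$. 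The corollary is valid on the set because $\mathcal{D}_{B_{\LN}}$ is convex.

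Next, the tokenwise Lipschitz bound $\|\gamma\|_\infty/\sqrt{\epsilon}$ (Lemma~\ref{lem:ln-lip}) lifts to the block norm. Since $\|\LN(x_i)-\LN(x_i')\|_{\mathrm{rms}}\le L_{\LN}\|x_i-x_i'\|_{\mathrm{rms}}$ for each row $i$, taking the maximum over $i$ gives the block-norm statement with constant $L_{\LN}$. The FFN sublayer is handled the same way, using the stated Lipschitz constant $L_{\FFN,\ell}$ in the same geometry. The FFN is tokenwise, so a per-token Lipschitz bound transfers to the block norm by the same maximum-over-rows argument.

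The step that needs the most care is the domain issue. Corollary~\ref{cor:mha-bounded} gives only a bounded-domain Lipschitz constant, and its constant grows with $B^2$. The argument relies on LayerNorm mapping all of $\R^{L\times d}$ into $\mathcal{D}_{B_{\LN}}$, regardless of input scale. I would state this explicitly, and also note that the bound uses the LayerNorm module of layer $\ell$, which is majorized by the uniform constants $B_{\LN}$ and $L_{\LN}$.

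Multiplying the two sublayer bounds gives the claim. No factor of $L$ enters at any step: Lemma~\ref{lem:stochastic}, Corollary~\ref{cor:mha-bounded} and the tokenwise LayerNorm bound are all free of sequence length. When weight norms and LayerNorm parameters are uniformly bounded across layers, every factor is bounded independently of $\ell$, which gives the uniformity statement.
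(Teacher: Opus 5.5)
Your proposal is correct and follows the paper's proof essentially step for step. Both arguments split the layer into the two residual sublayers and use Lemma~\ref{lem:ln-output} to place $\LN(X_\ell)$ in $\mathcal{D}_{B_{\LN}}$ for arbitrary residual-stream inputs, so that Corollary~\ref{cor:mha-bounded} applies; each sublayer is then bounded by $1+L_{\LN}\cdot(\text{sublayer constant})$ and the two bounds are multiplied. Your explicit tokenwise lift of the LayerNorm and FFN Lipschitz bounds to the block-$\infty$/RMS norm fills in a step that the paper leaves implicit.
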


\begin{theorem}[Post-LN bound on the normalized-state domain]
\label{thm:postln-full}
For $\ell\ge 1$, the incoming state is a LayerNorm output and satisfies $\|X_\ell\|_{\infty,\mathrm{rms}}\le B_{\LN}$. On this domain,
\[
L_{\mathrm{layer},\ell}^{\mathrm{post}}\;\le\;L_{\LN}^2\bigl(1+L_{\MHA,\ell}^{(B_{\LN})}\bigr)\bigl(1+L_{\FFN,\ell}\bigr).
\]
The first layer requires a separate bound on its input magnitude.
\end{theorem}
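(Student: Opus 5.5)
We prove Theorem~\ref{thm:postln-full} by splitting a complete post-LN layer into its two residual-then-normalize sublayers and composing their Lipschitz constants on the appropriate domains. We take the complete layer to be
\[
Y_\ell=\LN\bigl(X_\ell+\MHA(X_\ell)\bigr),\qquad X_{\ell+1}=\LN\bigl(Y_\ell+\FFN(Y_\ell)\bigr),
\]
with LayerNorm applied tokenwise. The domain claim comes first. For $\ell\ge1$, $X_\ell$ is the output of the previous layer's final LayerNorm. Applying Lemma~\ref{lem:ln-output} row by row gives $\|X_\ell\|_{\infty,\mathrm{rms}}\le\|\gamma_j\|_\infty+\|\beta_j\|_\infty\le B_{\LN}$, so $X_\ell\in\mathcal{D}_{B_{\LN}}$. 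The same lemma shows that the intermediate state $Y_\ell$ also lies in $\mathcal{D}_{B_{\LN}}$. For the FFN sublayer we only need the global constant $L_{\FFN,\ell}$, so the bound on $Y_\ell$ is not strictly required there.

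For the first sublayer, take $X,X'\in\mathcal{D}_{B_{\LN}}$. The residual map $X\mapsto X+\MHA(X)$ satisfies, by the triangle inequality and Corollary~\ref{cor:mha-bounded} with $B=B_{\LN}$,
\[
\|(X+\MHA(X))-(X'+\MHA(X'))\|_{\infty,\mathrm{rms}}\le\bigl(1+L_{\MHA,\ell}^{(B_{\LN})}\bigr)\|X-X'\|_{\infty,\mathrm{rms}}.
\]
Here the domain $\mathcal{D}_{B_{\LN}}$ is used: the corollary's constant is valid only on this convex set, and both points lie in it.

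Next we lift the Lipschitz bound of LayerNorm to the block geometry. Tokenwise LayerNorm has per-row Lipschitz constant $\|\gamma_j\|_\infty/\sqrt{\epsilon_j}\le L_{\LN}$ in $\|\cdot\|_{\mathrm{rms}}$ (Lemma~\ref{lem:ln-lip}). The rms normalization is the same on input and output because both have width $d$. Taking the maximum over rows shows that the rowwise application is $L_{\LN}$-Lipschitz in $\|\cdot\|_{\infty,\mathrm{rms}}$, because row $i$ of the output depends only on row $i$ of the input. The first sublayer is therefore $L_{\LN}(1+L_{\MHA,\ell}^{(B_{\LN})})$-Lipschitz on $\mathcal{D}_{B_{\LN}}$. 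The FFN sublayer is handled identically and gives $L_{\LN}(1+L_{\FFN,\ell})$, with $L_{\FFN,\ell}$ understood as a Lipschitz constant in the block-$\infty$/RMS geometry, which holds automatically for a tokenwise FFN.

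Composing the two sublayers, we apply the second bound to the pair $Y_\ell(X),Y_\ell(X')$ and obtain the product $L_{\LN}^2(1+L_{\MHA,\ell}^{(B_{\LN})})(1+L_{\FFN,\ell})$.

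The main subtlety is the domain bookkeeping. Corollary~\ref{cor:mha-bounded} is a restricted Lipschitz statement, so we must make sure it is only invoked on inputs in $\mathcal{D}_{B_{\LN}}$. This is where $\ell\ge1$ is essential. For $\ell=0$ the embeddings are not LayerNorm outputs, so $B_{\LN}$ does not bound them, which explains the closing caveat of the theorem.

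A second point to check is that the Lipschitz constant of LayerNorm in rms units equals the $\ell_2$ constant. This holds because the factor $1/\sqrt{d}$ cancels on both sides.
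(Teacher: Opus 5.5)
Your proof is correct and follows essentially the same route as the paper's. It uses the same two-sublayer decomposition, the same domain argument via Lemma~\ref{lem:ln-output} for $\ell\ge1$ (so Corollary~\ref{cor:mha-bounded} applies to the MHA residual map), global LayerNorm and FFN constants for the remaining factors, and the product of the two sublayer bounds. Your extra remarks on lifting the tokenwise LayerNorm and FFN Lipschitz constants to the block-$\infty$/RMS norm, with the $1/\sqrt{d}$ normalization cancelling, spell out a step the paper leaves implicit.
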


Both architectures therefore admit finite, sequence-length-independent upper bounds on the stated domains. These bounds do not establish that their actual per-layer sensitivities are equal, nor do they by themselves explain the empirical optimization gap. We next examine the algebraic structure of residual-sublayer Jacobians through depth.

\subsection{Gradient Flow Analysis}

We now examine how gradients propagate through depth, where pre-LN and post-LN differ structurally. Let $J_{\MHA}(X)$ and $J_{\LN}(Y)$ denote the Jacobians of MHA and LayerNorm evaluated at inputs $X$ and $Y$, respectively. The layer Jacobian $\partial X_{\ell+1}/\partial X_\ell$ determines how gradients flow backward through layer $\ell$. Throughout this subsection and its proofs, ordered matrix products are written with later layers on the left:
\[
\prod_{k=\ell}^{m-1} A_k \;:=\; A_{m-1}A_{m-2}\cdots A_\ell,
\]
matching the composition order of the chain rule.

When the argument of $J_{\LN}$ is a matrix $Y\in\R^{L\times d}$, $J_{\LN}(Y)$ denotes the Jacobian of the rowwise map $Y\mapsto\LN(Y)$, which is block diagonal across tokens with per-token blocks $J_{\LN}(y_i)$; we write $\mathcal{J}_{\LN,k}(Y)$ for this same sequence-level Jacobian where the distinction from its per-token blocks matters (Corollary~\ref{cor:ln-sequence} in Appendix~\ref{app:geometry} makes the tokenwise-to-sequence lift precise).

For clarity, we analyze one attention residual sublayer; the same structural distinction between the presence and absence of a bare identity term applies separately to the FFN residual sublayer. The algebra below does not by itself provide a lower or upper bound on the full end-to-end gradient norm.

\paragraph{Pre-LN gradient structure.} In pre-LN, $X_{\ell+1} = X_\ell + \MHA(\LN(X_\ell))$, so the layer Jacobian is:
\[
\frac{\partial X_{\ell+1}}{\partial X_\ell} = I + J_{\MHA}(\LN(X_\ell)) \cdot J_{\LN}(X_\ell).
\]
The identity $I$ appears as an \emph{additive} term.

\begin{theorem}[Pre-LN: identity gradient path]
\label{thm:preln-gradient}
Define $J_k := J_{\MHA}(\LN(X_k)) \cdot J_{\LN}(X_k)$. Then:
\begin{enumerate}
\item[(i)] Each sublayer Jacobian is bounded: $\|J_k\| \le L_{\MHA,k}^{(B_{\LN})} \cdot L_{\LN} =: C_k$, with no explicit dependence on $N$ or $L$.
\item[(ii)] The end-to-end Jacobian expands as:
\[
\frac{\partial X_m}{\partial X_\ell} = \prod_{k=\ell}^{m-1}(I + J_k) = I + \sum_{k=\ell}^{m-1} J_k + R_{\ge 2},
\]
where $R_{\ge 2}$ is the exact sum of all ordered products containing at least two sublayer Jacobians.
\item[(iii)] The expansion contains an additive identity term that does not itself accumulate a multiplicative depth factor.
\end{enumerate}
\end{theorem}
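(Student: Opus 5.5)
The proof has three parts, handled in the order of the statement.

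\emph{Part (i).} I would bound $\|J_k\|$ using the chain rule and submultiplicativity of induced norms in the block-$\infty$/RMS geometry. This gives
\[
\|J_k\|\le\|J_{\MHA}(\LN(X_k))\|\,\|J_{\LN}(X_k)\|.
\]
For the second factor, the sequence-level LayerNorm Jacobian is block diagonal across tokens. The max-over-tokens norm of a block-diagonal operator equals the maximum of the per-token operator norms, which is the lift recorded in Corollary~\ref{cor:ln-sequence}. Each per-token norm is at most $\|\gamma_k\|_\infty/\sqrt{\epsilon_k}\le L_{\LN}$ by Lemma~\ref{lem:ln-lip}, because the derivative norm at a point is bounded by the Lipschitz constant. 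For the first factor, Lemma~\ref{lem:ln-output} places $\LN(X_k)$ in $\mathcal{D}_{B_{\LN}}$. Theorem~\ref{thm:mha} with $\tilde\theta_h\le1$ and $\bar B_U\le B_{\LN}\sqrt d$ then gives $\Lambda_{\MHA}(\LN(X_k))\le L_{\MHA,k}^{(B_{\LN})}$. This constant is exactly the expression in Corollary~\ref{cor:mha-bounded}, since $\bar B_U^2\le dB_{\LN}^2$. Neither constant involves $L$ or $N$: the MHA constant depends only on the projection norms, $d$, $d_h$, $\tau$ and $B_{\LN}$, and the LN constant depends only on $\gamma$ and $\epsilon$.

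\emph{Part (ii).} I would first derive the product formula from the chain rule applied to $X_m=f_{m-1}\circ\cdots\circ f_\ell(X_\ell)$, where each layer Jacobian is $I+J_k$ as displayed before the theorem. The left-to-right order follows the stated convention. Next I would expand the ordered product $\prod_{k=\ell}^{m-1}(I+J_k)$ by distributivity. This gives a sum over subsets $T\subseteq\{\ell,\dots,m-1\}$ of the ordered products $\prod_{k\in T}J_k$, with the order inherited from the product and the empty product equal to $I$. Grouping these terms by $|T|=0$, $|T|=1$, and $|T|\ge2$ yields $I$, then $\sum_k J_k$, then $R_{\ge2}$. A short induction on $m-\ell$ makes this rigorous: if $P_{m}=I+S_m+R_m$, then
\[
(I+J_m)P_m=I+(S_m+J_m)+(R_m+J_mS_m+J_mR_m),
\]
and every term in the last group contains at least two Jacobians.

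\emph{Part (iii).} This is then a reading of the expansion. The $|T|=0$ term is exactly $I$ with coefficient one for every depth $m-\ell$, so it carries no factor of the form $\prod(1+C_k)$. All depth-dependent growth lives in the other two groups, bounded by $\sum C_k$ and by $\prod(1+C_k)-1-\sum C_k$ respectively.

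The main subtlety is in part (i): I need to justify that the norm on the sequence-level block-diagonal LayerNorm Jacobian is the per-token maximum, and to use the local Jacobian bound at a point of $\mathcal{D}_{B_{\LN}}$ rather than a Lipschitz bound along a path. I would state this carefully through Corollary~\ref{cor:ln-sequence} and Theorem~\ref{thm:mha}. The rest is algebra.
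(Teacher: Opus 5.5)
Your proposal is correct and follows the paper's proof essentially step for step. The MHA factor is bounded via Lemma~\ref{lem:ln-output} plus Theorem~\ref{thm:mha} with $\tilde\theta_h\le1$, the LayerNorm factor via Lemma~\ref{lem:ln-lip}, (ii) follows from the chain rule plus distributive expansion, and (iii) is a structural read-off. Your additions are sound: the explicit induction, the remainder bound $\prod_k(1+C_k)-1-\sum_k C_k$, and the tokenwise lift. One caveat on the lift: Corollary~\ref{cor:ln-sequence} is stated for the spectral norm on $\R^{L\times d}$, not the block-$\infty$/RMS norm. The block-$\infty$/RMS version is therefore a separate one-line check, namely $\max_i\|J_{\LN}(x_i)h_i\|_2\le L_{\LN}\max_i\|h_i\|_2$, rather than a direct citation.
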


The remainder $R_{\ge 2}$ is an exact algebraic object, not a uniformly small term in depth. Part (iii) is a structural statement only; it does not imply $\|\partial X_m/\partial X_\ell\|\ge1$, because the remaining terms can cancel the identity contribution. The alternating-sign scalar chain $x_{k+1}=x_k+(-1)^k a\,x_k$ provides an explicit example: its end-to-end Jacobian decays as $(1-a^2)^{(m-\ell)/2}$. The point is therefore not a gradient lower bound, but the existence of one term in the expansion that contains no sublayer Jacobian factors.

\paragraph{Post-LN gradient structure.} In post-LN, $X_{\ell+1} = \LN(X_\ell + \MHA(X_\ell))$, so the LayerNorm Jacobian left-multiplies the residual-block Jacobian:
\[
\frac{\partial X_{\ell+1}}{\partial X_\ell} = J_{\LN}(Y_\ell) \cdot (I + J_{\MHA}(X_\ell)),
\]
where $Y_\ell = X_\ell + \MHA(X_\ell)$.

\begin{theorem}[Post-LN: no identity gradient path]
\label{thm:postln-gradient}
The end-to-end Jacobian is:
\begin{enumerate}
\item[(i)] $\displaystyle\frac{\partial X_m}{\partial X_\ell} = \prod_{k=\ell}^{m-1} J_{\LN}(Y_k) (I + J_{\MHA}(X_k))$.
\item[(ii)] The LayerNorm Jacobian $J_{\LN}(y)$ is projection-like: it annihilates the constant direction ($J_{\LN}(y)\mathbf{1} = 0$), is bounded on the centered subspace, and has additional shrinkage along the normalized centered input direction.
\item[(iii)] Expanding the product yields $\prod_{k=\ell}^{m-1} \mathcal{J}_{\LN,k}(Y_k) + \bigl(\text{terms involving } J_{\MHA}\bigr)$. We call the first term the \emph{LayerNorm-only term}; the expansion contains no bare additive identity term.
\end{enumerate}
\end{theorem}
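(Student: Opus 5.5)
The plan is to handle the three parts in order. The first and third are essentially chain-rule bookkeeping; the real content is the LayerNorm Jacobian computation in part (ii).

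For part (i), I would apply the chain rule to the composition $X_m = f_{m-1}\circ\cdots\circ f_\ell(X_\ell)$, where $f_k(X)=\LN(X+\MHA(X))$. Writing $Y_k=X_k+\MHA(X_k)$, the derivative of $f_k$ at $X_k$ is $J_{\LN}(Y_k)\,(I+J_{\MHA}(X_k))$. Here $J_{\LN}(Y_k)$ is the block-diagonal sequence-level Jacobian $\mathcal{J}_{\LN,k}(Y_k)$. Composing these with the paper's convention that later layers sit on the left gives the stated ordered product directly.

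For part (ii), I would compute the per-token LayerNorm Jacobian explicitly. Let $P=I-\frac1d\mathbf{1}\mathbf{1}^\top$ be the centering projector, $c=Py$, $\sigma=\sqrt{\|c\|_2^2/d+\epsilon}$, and $\hat c=c/\|c\|_2$. Differentiating $\gamma\odot c/\sigma+\beta$ gives
\[
J_{\LN}(y)=\frac{1}{\sigma}\Diag(\gamma)\Bigl(I-\frac{cc^\top}{d\sigma^2}\Bigr)P .
\]
Three consequences follow.
\begin{itemize}
\item Since $P\mathbf{1}=0$, we get $J_{\LN}(y)\mathbf{1}=0$.
\item On the centered subspace, $P$ acts as the identity. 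The middle factor has eigenvalues $1$ and $1-\|c\|^2/(d\sigma^2)=\epsilon/\sigma^2\in(0,1]$, so $\|J_{\LN}(y)\|_2\le\|\gamma\|_\infty/\sigma\le\|\gamma\|_\infty/\sqrt\epsilon$. This matches Lemma~\ref{lem:ln-lip}.
\item Along $\hat c$, the factor is exactly $\epsilon/\sigma^2$. This is the additional shrinkage, and it is strictly less than $1$ whenever $c\ne0$.
\end{itemize}
I expect this step to be the main obstacle. The difficulty is not the differentiation but stating ``projection-like'' precisely with $\Diag(\gamma)$ on the left, since $\gamma$ breaks the symmetry of the projection. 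I would therefore phrase each claim for the right factor $(I-cc^\top/(d\sigma^2))P$ and pull $\|\gamma\|_\infty$ out only as an operator-norm factor. The sequence-level statement then follows blockwise, as in Corollary~\ref{cor:ln-sequence}: the annihilated subspace is spanned by the tokenwise constant directions, and the operator norm is the maximum over the blocks.

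For part (iii), I would expand each factor as $\mathcal{J}_{\LN,k}+\mathcal{J}_{\LN,k}J_{\MHA}(X_k)$ and distribute the product over all $2^{m-\ell}$ choices. The choice that takes $\mathcal{J}_{\LN,k}$ in every factor gives $\prod_k\mathcal{J}_{\LN,k}(Y_k)$, the LayerNorm-only term. Every other choice contains at least one $J_{\MHA}$ factor.

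It remains to argue that no bare additive identity appears. Every term in the expansion has the leftmost factor $\mathcal{J}_{\LN,m-1}(Y_{m-1})$. That factor annihilates the tokenwise constant direction, so every term, and hence their sum, annihilates any perturbation $\Delta X$ whose rows are all multiples of $\mathbf{1}$. An identity summand cannot have this property. So the only ``identity-like'' contribution is the LayerNorm-only term, which is projection-like rather than $I$. This contrasts with Theorem~\ref{thm:preln-gradient}(ii), where the $I$ summand arises because the residual stream bypasses LayerNorm.

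I would close by remarking that this is a structural statement about the expansion only, in the same sense as the caveat attached to Theorem~\ref{thm:preln-gradient}.
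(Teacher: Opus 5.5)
Your parts (i) and (ii), and the $2^{m-\ell}$-term expansion in (iii), follow the paper's proof. Your factorization $\frac{1}{\sigma}\Diag(\gamma)(I-cc^\top/(d\sigma^2))P$ equals the paper's $\frac{1}{\sigma}\Diag(\gamma)(P_\mu-\hat y\hat y^\top/d)$, and your shrinkage factor $\epsilon/\sigma^2$ is the paper's $\rho(y)$.

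The problem is the extra argument you give for ``no bare additive identity.'' You argue that the leftmost factor $\mathcal{J}_{\LN,m-1}(Y_{m-1})$ annihilates tokenwise-constant vectors, so every term annihilates tokenwise-constant perturbations. But kernels propagate from the right: $\ker(AB)\supseteq\ker B$, not $\ker A$. The conclusion is in fact false. Already for $m=\ell+1$, a perturbation $\Delta X$ with rows $c_i\mathbf{1}^\top$ is mapped to $\mathcal{J}_{\LN,\ell}(Y_\ell)J_{\MHA}(X_\ell)\Delta X$. Here $J_{\MHA}(X_\ell)\Delta X$ is generically not tokenwise constant: its value pathway alone mixes the rows $c_j\mathbf{1}^\top W_h^V$ and then applies $W^O$. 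So neither the term $\mathcal{J}_{\LN,\ell}J_{\MHA}(X_\ell)$ nor the full Jacobian vanishes on such perturbations.

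What the leftmost factor does control is the \emph{range}. Every term has the form $\mathcal{J}_{\LN,m-1}(Y_{m-1})(\cdots)$, so its range lies in $\operatorname{range}\mathcal{J}_{\LN,m-1}(Y_{m-1})$. This range consists of states whose rows lie in $\Diag(\gamma_{m-1})\mathbf{1}^\perp$, so it has dimension at most $L(d-1)$. Consequences:
\begin{itemize}
\item Every term, and the full Jacobian $M=\partial X_m/\partial X_\ell$, is rank-deficient, so no term can equal $I$.
\item Taking a unit $w$ with $w^\top M=0$ gives $w^\top(M-I)=-w^\top$, hence $\|M-I\|_2\ge 1$.
\item In backpropagation terms, if $\gamma_{m-1}$ has no zero entries, the transposed product kills upstream gradients whose token rows are proportional to the entrywise reciprocal of $\gamma_{m-1}$. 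These are constant rows only when $\gamma_{m-1}\propto\mathbf{1}$, e.g.\ at initialization.
\end{itemize}
With this repair, your argument becomes a genuinely stronger, non-syntactic version of (iii). Alternatively, you can drop it and rest (iii), as the paper does, on the observation that every term of the expansion carries $m-\ell$ LayerNorm-Jacobian factors, so none is the bare identity. The theorem's claim is structural and needs nothing more.
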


Lemma~\ref{lem:ln-projection} (Appendix~\ref{app:geometry}) shows that each per-token LayerNorm Jacobian $J_{\LN,k}(y)$ annihilates the constant direction $\mathbf{1}$ and, on the centered subspace orthogonal to $\mathbf{1}$, has operator norm bounded by $\|\gamma_k\|_\infty/\sigma_k(y)$, with an additional shrinkage factor $\rho(y)<1$ along the normalized input direction when $y$ is non-constant. Because LayerNorm acts tokenwise, the sequence-level Jacobian is block diagonal across tokens, and Corollary~\ref{cor:ln-sequence} lifts the per-token bound to the full $L\times d$ state: writing $a_{k,i}=\|\gamma_k\|_\infty/\sigma_k(y_{k,i})$ for token $i$ at layer $k$, the LayerNorm-only term satisfies $\|\prod_{k=\ell}^{m-1}\mathcal{J}_{\LN,k}(Y_k)\|_2\le\max_i\prod_k a_{k,i}$, which decays geometrically whenever the worst-token factor $\max_i a_{k,i}$ is at most $1-\eta$ at every layer.

Corollary~\ref{cor:ln-sequence} therefore gives a sufficient geometric-contraction condition for the LayerNorm-only term. It does not bound the remaining terms that contain MHA Jacobians, so Theorem~\ref{thm:postln-gradient} should be read as a structural decomposition rather than as a contraction theorem for the full post-LN Jacobian. We do not test the sufficient condition empirically.

\subsection{Discussion}

The per-layer bounds do not themselves separate pre-LN from post-LN; the distinction appears in Jacobian composition. The pre-LN expansion contains a bare additive identity term, whereas the post-LN expansion does not. Corollary~\ref{cor:ln-sequence} gives a sufficient contraction condition only for the LayerNorm-only post-LN term. These structural observations motivate the gradient-profile comparison in Section~\ref{sec:experiments}, but they do not determine the full end-to-end gradient in either architecture.

\section{Empirical Analysis}
\label{sec:experiments}

We evaluate the theory in three-seed early-training runs of $774$M-parameter GPT-2-style transformers on FineWeb \citep{penedo2024fineweb}. Each run covers $2000$ optimizer updates ($131$M tokens), with warmup completing at one quarter of the run; we therefore make no claim about converged behavior. We ask three questions: (i) does the exact softmax identity agree numerically with exhaustive computation at small $L$? (ii) how do pre-LN and post-LN optimization and gradient profiles differ in this window? and (iii) how does the large-$L$ certificate $\thetacert$ evolve as attention becomes more concentrated? Full experimental details are in Appendix~\ref{app:experiments}.

\paragraph{Certificates for $\theta(p)$ at large $L$.}
For large $L$, we do not compute $\theta(p)$ exactly (by Remark~\ref{rem:maxcut}, exact computation is a \textsc{Partition}-type subset optimization). Given the greedy-selected subset $S_{\mathrm{g}}$ described in Appendix~\ref{app:experiments}, define
\[
\thetacert(p) := 4\,p(S_{\mathrm{g}})(1-p(S_{\mathrm{g}})).
\]
Because $S_{\mathrm{g}}$ is feasible, $\thetacert(p) \le \theta(p) \le 1$. Thus a certificate close to one proves that the exact $\theta(p)$ is close to one; all large-$L$ empirical quantities below are certificates $\thetacert$, not exact values.

\paragraph{Attention measurements.}
At each checkpoint, we run a separate evaluation-mode forward pass (dropout disabled) and capture attention probabilities at layers $\{0,N/2,N-1\}$. From each layer, we sample $512$ rows pooled over batch and heads, restricting query positions to $t\ge16$ so that the causal support $t+1$ is bounded away from the degenerate early rows. We compute $\thetacert$ for each sampled row on its support distribution and record the 5th, 25th, 50th, 75th, and 95th percentiles per layer, unrounded. For the same sampled rows we also record the row maximum $p_{\max}=\max_j p_j$ and the normalized entropy
\[
H_{\mathrm{norm}}(p) \;=\; \frac{-\sum_j p_j\log p_j}{\log|\operatorname{supp}(p)|},
\]
where $|\operatorname{supp}(p)|=t+1$ is the causal support at query position $t$. Reported $\thetacert$ aggregates take, at each checkpoint, the mean across sampled layers of the per-layer median (averaged across seeds), together with a \emph{percentile envelope} running from the smallest per-layer 5th percentile to the largest per-layer 95th percentile over layers and seeds. The envelope is not a pooled 5th--95th percentile interval of the row population. Reported entropy and $p_{\max}$ trajectories are the mean across sampled layers of the per-layer sampled-row mean, averaged across seeds.

\paragraph{Relation to Theorem~\ref{thm:mha}.}
Because rows are pooled over heads, the maximum sampled certificate lower-bounds $\max_h\tilde\theta_h(U)$, where $\tilde\theta_h(U)=\max_i\theta(A_h[i,:])$, rather than each head-specific $\tilde\theta_h$. A pooled maximum near one therefore certifies that at least one head has a near-maximal distribution-dependent factor. The sampled-row quantiles describe how common near-bisectable rows are across the pooled population. A per-head certificate, the maximum over sampled rows from a single head, would lower-bound that head's $\tilde\theta_h$ individually; our logged statistics pool heads.

\paragraph{Other metrics.}
Loss is mean cross-entropy per token, averaged over exactly the micro-batches of each optimizer update (sequences are packed, so no padding tokens exist); gradient norm is the global parameter-gradient $\ell_2$ norm after accumulation and before clipping; layerwise gradient norms are per-block parameter-gradient norms, a descriptive quantity distinct from the activation Jacobians analyzed in Section~\ref{sec:layers}. A gradient spike is an optimizer update whose pre-clipping norm exceeds twice the median of the available preceding updates, using a window of at most $50$ updates; the first $10$ updates are excluded. Spikes are computed offline from the logged per-update series. All reported $\pm$ values are sample standard deviations across the three seeds (denominator $n-1$).

\paragraph{Numerical checks and large-$L$ certificates.}
For $L\in\{8,16\}$, we independently compute both sides of $\|J_\tau(u)\|_{\infty \to 1} = \theta(p)/\tau$ by exhaustive enumeration and obtain relative error below $10^{-13}$ over $500$ random logit vectors per tested length. For larger $L$, we do not claim independent numerical verification of the operator norm; we compute the feasible-subset certificate $\thetacert(p)\le\theta(p)$, and certificates close to one establish that the exact sensitivity factor is also close to its maximum. The full procedure is in Appendix~\ref{app:experiments}.

\paragraph{Optimization outcomes.}
Figure~\ref{fig:combined} reports the logged trajectory. Over the final $50$ updates, Pre-LN reaches $4.509\pm0.005$ across the three seeds. Post-LN outcomes split sharply by seed: one seed reaches $4.98$, while two plateau near $7.67$ from roughly update $250$ onward (mean $6.771\pm1.551$). For each matched seed, Pre-LN has lower final-window loss than Post-LN. These are early-training results, not claims about converged performance.

\paragraph{Gradient spikes.}
Post-LN registers $35.3\pm11.7$ spike updates per run under the protocol rule, compared with $9.0\pm1.0$ for Pre-LN. The largest observed pre-clipping norm across the three runs is $112$ for Post-LN and $35$ for Pre-LN. Most of the Post-LN excess occurs during the first $\sim150$ updates. Several spike updates coincide across seeds and architectures (e.g.\ updates $1057$--$1059$ and $1459$--$1460$); because the runs share the same data order, these synchronized spikes are consistent with shared data-order effects and should not be attributed solely to architecture.

\paragraph{Layerwise gradient profiles.}
The layerwise profiles also differ sharply. In the two plateaued Post-LN runs, the last-checkpoint per-block gradient norms are at most $7\times10^{-6}$ across layers $0$--$33$, four orders of magnitude below the final blocks ($0.04$ at layer $35$). For the seed displayed in Figure~\ref{fig:combined}(d) (seed~$123$), the non-plateaued Post-LN run still shows a strong increase toward the output (mean $0.023$ over layers $0$--$29$ versus $0.14$--$0.20$ at the final blocks). For the corresponding Pre-LN run, the profile varies more smoothly across depth ($0.02$--$0.06$ after an initial $0.15$ at layer $0$). These parameter-gradient profiles are descriptive and do not test the sufficient contraction condition of Corollary~\ref{cor:ln-sequence}.

\begin{figure}[t]
\centering
\includegraphics[width=0.9\columnwidth]{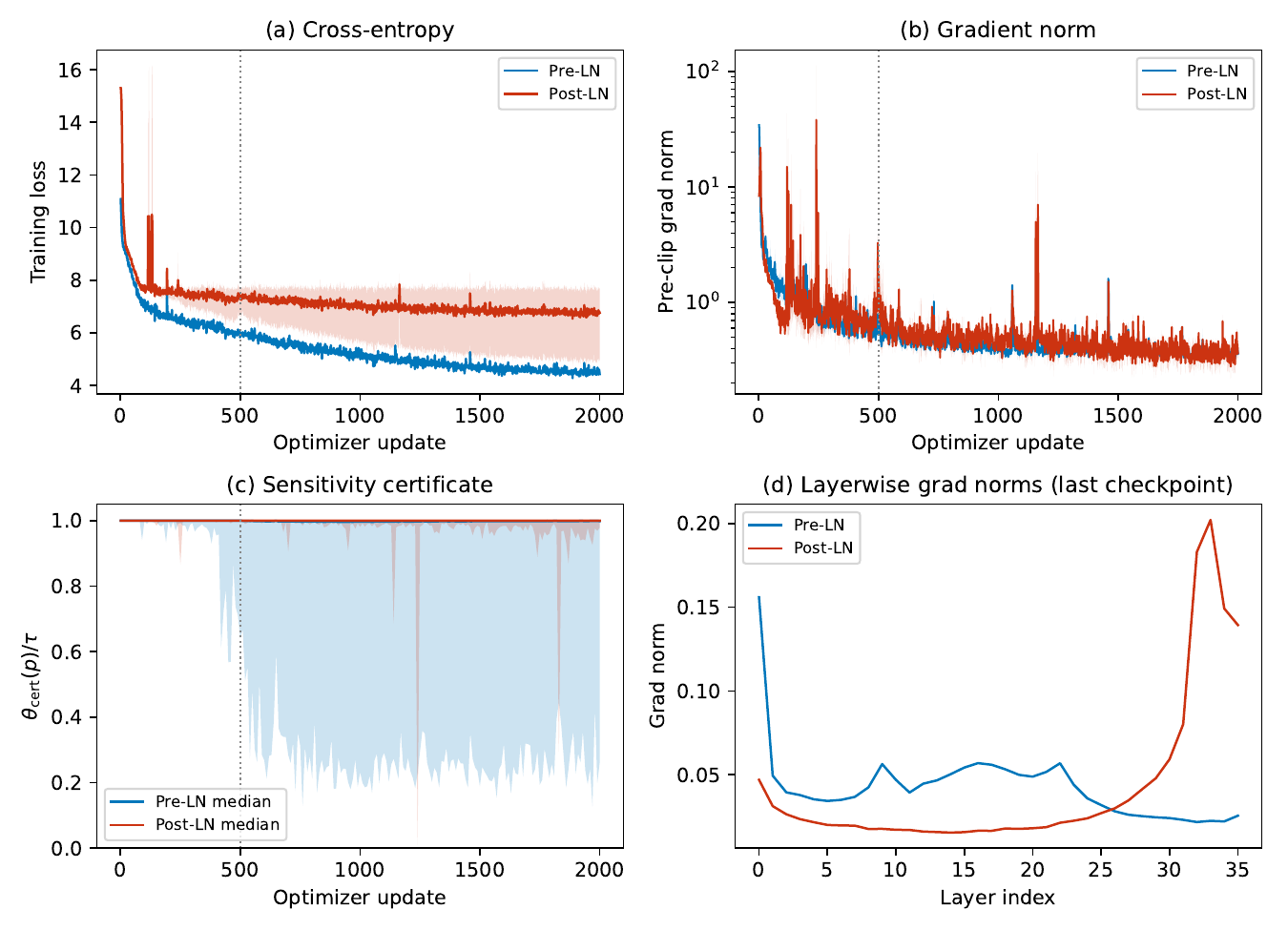}
\caption{Early-training dynamics for $774$M-parameter transformers over the logged trajectory. \textbf{(a)} Cross-entropy and \textbf{(b)} pre-clipping gradient norm: solid lines are the per-update mean across the three seeds, shaded regions span the per-update minimum to maximum across seeds. Panel (b) uses a logarithmic $y$-axis. \textbf{(c)} Lower-bound certificate $\thetacert(p)/\tau$: solid lines show the mean across sampled layers and seeds of the per-layer medians; shaded regions show the envelope from the minimum per-layer 5th percentile to the maximum per-layer 95th percentile across layers and seeds. \textbf{(d)} Per-block parameter-gradient norms at the last logged checkpoint, shown for a single seed (seed~$123$, the non-plateaued Post-LN seed); in the two plateaued Post-LN runs these norms are at most $7\times10^{-6}$ through layers $0$--$33$. The dotted vertical line in (a)--(c) marks the end of warmup at update $500$. The gradient profiles are descriptive; they do not test the sufficient LayerNorm-contraction condition of Corollary~\ref{cor:ln-sequence}.}
\label{fig:combined}
\end{figure}
\FloatBarrier

\paragraph{Attention concentration increases while the median sensitivity certificate remains near one.}
Across the three Pre-LN seeds, normalized attention entropy falls from $0.987$ to $0.726$, while mean row maximum weight rises from $0.010$ to $0.174$ (means over the sampled layers; Appendix~\ref{app:theta-traj}). Despite this substantial increase in concentration, the median certificate $\thetacert(p)/\tau$ remains at or above $0.995$ at every sampled layer and checkpoint (last-checkpoint mean of the per-layer medians, $0.9987$).

The median hides a substantial lower tail. At the middle sampled layer, the 5th percentile of $\thetacert$ falls to $0.26$--$0.58$ across seeds, and the corresponding rows have maximum weights approaching $0.9$. Corollary~\ref{cor:regimes} explains why the median and tail can behave differently: moderate $p_{\max}$ leaves $\theta$ bounded away from zero by $\theta(p)\ge1-p_{\max}^2$, while rows with $p_{\max}\ge1/2$ satisfy the exact dominant-atom formula $\theta(p)=4p_{\max}(1-p_{\max})$. A low greedy certificate alone does not upper-bound the exact $\theta$, but the dominant-atom formula does characterize the strongly concentrated rows directly.

Attention in the two plateaued Post-LN runs remains near uniform: their sampled-row maximum weights closely match the uniform-support baseline in these two summary statistics (median $0.0020$ vs.\ $0.0019$; mean $0.0041$ vs.\ $0.0041$), and their certificates remain near one. The Post-LN seed that trains does develop concentrated attention at the deeper sampled layers. In our runs, substantial attention concentration therefore coexists with near-maximal exact softmax sensitivity for at least half of the sampled rows. Rows approaching the dominant-atom regime exhibit lower exact sensitivity. These observations do not identify the cause of the pre-LN/post-LN optimization gap.

\FloatBarrier
\section{Conclusion}
\label{sec:conclusion}

We derived the exact identity $\|J_\tau(u)\|_{\infty\to1}=\theta(p)/\tau$ for the tempered-softmax Jacobian (Theorem~\ref{thm:softmax}), where $\theta(p)$ measures bisectability rather than conventional concentration. In a block-$\infty$/RMS geometry aligned with worst-token perturbations, the identity yields a distribution-aware local Jacobian bound for multi-head attention and a sequence-length-independent Lipschitz bound on bounded input sets. The MHA decomposition separates a lower-degree value pathway from a softmax-mediated attention-weight pathway and makes the dependence on input magnitude, width, temperature, and projection norms explicit.

We also identified a structural distinction between normalization placements. A pre-LN residual-sublayer Jacobian contains an additive identity term, whereas a post-LN residual-sublayer Jacobian does not. Under an explicit sufficient condition, the LayerNorm-only term in the post-LN expansion contracts geometrically. This result does not establish contraction of the full post-LN Jacobian, and the sufficient condition is not tested in our experiments.

Across the three Pre-LN early-training runs, attention becomes substantially more concentrated while each sampled layer retains a near-one median certificate. This certifies near-maximal exact sensitivity for at least half of the sampled rows within each layer. A minority of strongly concentrated rows enters the lower-sensitivity dominant-atom regime characterized by Corollary~\ref{cor:regimes}. These results show that concentration and bisectability capture different properties of an attention distribution.

\textbf{Limitations.} The block-$\infty$/RMS geometry measures worst-token rather than token-aggregated sensitivity; the bounds retain explicit width and parameter-norm dependence; the post-LN contraction condition is unverified empirically; large-$L$ experiments use lower-bound certificates rather than exact subset optimization; and the empirical comparisons cover only a $2000$-update early-training window at a single model configuration ($131$M tokens per run). Extensions to RMSNorm, grouped-query attention, MoE, and other modern architectures remain open.
\section*{Acknowledgments}
The author thanks Halil Talha Ertan for identifying a counterexample to the necessity direction of an earlier depth-scaling theorem and a parity oversight in the uniform case of Corollary~\ref{cor:regimes}. The author also thanks the NeurIPS 2026 reviewers for identifying an error in the earlier depth-scaling derivation and for suggestions that led to the maximum-weight lower bound and the attention-sink discussion. The depth-scaling claims have been removed from this version. Any remaining errors are the author's own.

\bibliographystyle{plainnat}
\bibliography{references_revised}

\newpage
\appendix
\section{Proofs for Section~\ref{sec:framework}: Geometric Framework}
\label{app:geometry}

This appendix provides proofs for the geometric framework developed in Section~\ref{sec:framework}.

\begin{proof}[Proof of Lemma~\ref{lem:stochastic} (Attention mixing is nonexpansive)]
Let $A \in \R^{L \times L}$ be row-stochastic, so $A_{ij} \ge 0$ and $\sum_{j=1}^L A_{ij} = 1$ for all $i$. Let $V \in \R^{L \times d}$ with rows $v_1, \ldots, v_L \in \R^d$.

The $i$-th row of $AV$ is
\[
(AV)_i = \sum_{j=1}^L A_{ij} v_j.
\]
Since $A_{ij} \ge 0$ and $\sum_j A_{ij} = 1$, this is a convex combination of the rows of $V$. By the triangle inequality:
\begin{align*}
\|(AV)_i\|_2 &= \left\|\sum_{j=1}^L A_{ij} v_j\right\|_2 \le \sum_{j=1}^L A_{ij} \|v_j\|_2 \\
&\le \sum_{j=1}^L A_{ij} \max_k \|v_k\|_2 = \max_k \|v_k\|_2.
\end{align*}
Taking the maximum over $i$ and dividing by $\sqrt{d}$:
\begin{align*}
\|AV\|_{\infty,\mathrm{rms}} &= \max_i \frac{\|(AV)_i\|_2}{\sqrt{d}} \\
&\le \frac{\max_k \|v_k\|_2}{\sqrt{d}} = \|V\|_{\infty,\mathrm{rms}}.
\end{align*}
\end{proof}

\begin{proof}[Proof of Lemma~\ref{lem:ln-output} (LayerNorm magnitude reset)]
LayerNorm computes
\[
\LN(x) = \gamma \odot \frac{x - \mu(x) \mathbf{1}}{\sigma(x)} + \beta,
\]
where $\mu(x) = \frac{1}{d}\sum_{j=1}^d x_j$, $\sigma(x) = \sqrt{\frac{1}{d}\sum_{j=1}^d (x_j - \mu(x))^2 + \epsilon}$, and $\mathbf{1}$ is the all-ones vector.

Let $\hat{x} = (x - \mu(x)\mathbf{1})/\sigma(x)$ denote the normalized vector. We have
\[
\|\hat{x}\|_2^2 = \sum_{j=1}^d \hat{x}_j^2 = \sum_{j=1}^d \frac{(x_j - \mu(x))^2}{\sigma(x)^2} = \frac{d \cdot (\sigma(x)^2 - \epsilon)}{\sigma(x)^2} \le d.
\]
Thus $\|\hat{x}\|_2 \le \sqrt{d}$, which gives $\|\hat{x}\|_{\mathrm{rms}} = \|\hat{x}\|_2/\sqrt{d} \le 1$.

For the full LayerNorm output:
\[
\|\LN(x)\|_{\mathrm{rms}} = \frac{\|\gamma \odot \hat{x} + \beta\|_2}{\sqrt{d}} \le \frac{\|\gamma \odot \hat{x}\|_2 + \|\beta\|_2}{\sqrt{d}}.
\]
Since $|(\gamma \odot \hat{x})_j| = |\gamma_j||\hat{x}_j| \le \|\gamma\|_\infty |\hat{x}_j|$, we have $\|\gamma \odot \hat{x}\|_2 \le \|\gamma\|_\infty \|\hat{x}\|_2 \le \|\gamma\|_\infty \sqrt{d}$. Similarly, $\|\beta\|_2 \le \|\beta\|_\infty \sqrt{d}$. Therefore:
\[
\|\LN(x)\|_{\mathrm{rms}} \le \frac{\|\gamma\|_\infty \sqrt{d} + \|\beta\|_\infty \sqrt{d}}{\sqrt{d}} = \|\gamma\|_\infty + \|\beta\|_\infty.
\]
\end{proof}

\begin{lemma}[LayerNorm Lipschitz constant]
\label{lem:ln-lip}
The Lipschitz constant of LayerNorm satisfies $\mathrm{Lip}(\LN) \le \|\gamma\|_\infty / \sqrt{\epsilon}$, where $\epsilon > 0$ is the numerical stability floor in the denominator.
\end{lemma}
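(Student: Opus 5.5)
I will bound the Jacobian of $\LN$ uniformly and then integrate along segments; $\mathbb{R}^d$ is convex, so the mean value inequality gives $\mathrm{Lip}(\LN)\le\sup_x\|J_{\LN}(x)\|_2$. I read the norm as the Euclidean norm on a single token, which is the same as the RMS norm because both sides carry the same $\sqrt d$ factor. Write $\LN(x)=\gamma\odot\hat x+\beta$ with $\hat x = Pz/\sigma$. Here $P=I-\tfrac1d\mathbf{1}\mathbf{1}^\top$ is the centering projection, $c=Px$, and $\sigma=\sqrt{\|c\|_2^2/d+\epsilon}$. Then $J_{\LN}(x)=\Diag(\gamma)\,D\hat x(x)$ and $\|\Diag(\gamma)\|_2=\|\gamma\|_\infty$, so it suffices to show $\|D\hat x(x)\|_2\le 1/\sqrt{\epsilon}$.

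Differentiating $\hat x=c/\sigma(c)$ with $c=Px$ and $\nabla_c\sigma = c/(d\sigma)$ gives
\[
D\hat x(x)=\frac{1}{\sigma}\Bigl(I-\frac{cc^\top}{d\sigma^2}\Bigr)P .
\]
The two factors commute, since $c$ lies in the range of $P$. Consider the symmetric matrix $I-\frac{cc^\top}{d\sigma^2}$. It has eigenvalue $1$ on $c^\perp$ and eigenvalue $1-\frac{\|c\|^2}{d\sigma^2}=\frac{\epsilon}{\sigma^2}\in(0,1]$ along $c$. Its operator norm is therefore at most $1$, and $\|P\|_2=1$. Hence $\|D\hat x\|_2\le 1/\sigma\le1/\sqrt\epsilon$, because $\sigma^2\ge\epsilon$.

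Combining the two steps gives $\|J_{\LN}(x)\|_2\le\|\gamma\|_\infty/\sqrt\epsilon$ for every $x$. Integrating $J_{\LN}$ along the segment from $x'$ to $x$ then gives the Lipschitz claim. For matrix inputs, $\LN$ acts rowwise, so the same per-token bound transfers to $\|\cdot\|_{\infty,\mathrm{rms}}$ by taking the maximum over tokens.

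The main care point is the eigenvalue computation of the rank-one correction: the $\epsilon$ term is exactly what keeps the eigenvalue $\epsilon/\sigma^2$ nonnegative, so the correction is a contraction. Everything else is routine.
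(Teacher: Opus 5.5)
Your proof is correct and follows essentially the same route as the paper. Both factor the Jacobian as $\sigma^{-1}\Diag(\gamma)$ times the centered rank-one-corrected projection $P_\mu - \hat x\hat x^\top/d$ (your $(I - cc^\top/(d\sigma^2))P$ is the same matrix), bound that factor's spectral norm by $1$ using its eigenvalue $\epsilon/\sigma^2$ along the centered direction, and then use $\sigma\ge\sqrt{\epsilon}$. You also spell out the mean-value integration step and the rowwise transfer to $\|\cdot\|_{\infty,\mathrm{rms}}$, which the paper leaves implicit; note the typo $Pz$ for $Px$.
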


\begin{proof}
The Jacobian of LayerNorm at $x$ is
\[
J_{\LN}(x) = \frac{\Diag(\gamma)}{\sigma(x)} \left(I - \frac{\mathbf{1}\mathbf{1}^\top}{d} - \frac{(x - \mu\mathbf{1})(x - \mu\mathbf{1})^\top}{d \sigma(x)^2}\right),
\]
where $\mu = \mu(x)$ for brevity.
Let $P_\mu = I - \frac{\mathbf{1}\mathbf{1}^\top}{d}$ be the projection onto the orthogonal complement of $\mathbf{1}$, and let $\hat{x} = (x - \mu(x)\mathbf{1})/\sigma(x)$ be the normalized centered vector. Then:
\[
J_{\LN}(x) = \frac{1}{\sigma(x)} \Diag(\gamma) \left(P_\mu - \frac{\hat{x}\hat{x}^\top}{d}\right).
\]
The matrix $P_\mu$ is an orthogonal projection with $\|P_\mu\|_2 = 1$. The matrix $\hat{x}\hat{x}^\top/d$ has spectral norm $\|\hat{x}\|_2^2/d \le 1$ (from the proof of Lemma~\ref{lem:ln-output}). 

The matrix $P_\mu - \hat{x}\hat{x}^\top/d$ can be analyzed as follows. Since $\hat{x}$ lies in the range of $P_\mu$ (it has zero mean), we can write this as $P_\mu(I - \hat{x}\hat{x}^\top/d)P_\mu$ restricted to the subspace orthogonal to $\mathbf{1}$. The term $I - \hat{x}\hat{x}^\top/d$ is a rank-one perturbation of the identity. Its eigenvalues are $1$ (with multiplicity $d-1$) and $1 - \|\hat{x}\|_2^2/d \ge 0$. Therefore $\|P_\mu - \hat{x}\hat{x}^\top/d\|_2 \le 1$, so
\[
\|J_{\LN}(x)\|_2 \le \frac{\|\gamma\|_\infty}{\sigma(x)} \le \frac{\|\gamma\|_\infty}{\sqrt{\epsilon}},
\]
since $\sigma(x) \ge \sqrt{\epsilon}$ by definition.
\end{proof}

The next lemma makes precise the contractive structure of the LayerNorm Jacobian used in Section~\ref{sec:layers}; because the relevant bounds are spectral-norm bounds, they also apply to the transposed products used in backpropagation. It strengthens Lemma~\ref{lem:ln-lip} by separating the action on the constant direction from the action on the centered subspace and by giving a quantitative bound on products.

\begin{lemma}[LayerNorm Jacobian: projection structure and product contraction]
\label{lem:ln-projection}
Fix $y\in\R^d$ and let $J_{\LN}(y)$ denote the LayerNorm Jacobian at $y$, with associated mean $\mu(y)$, standard deviation $\sigma(y)\ge\sqrt{\epsilon}$, normalized vector $\hat{y}=(y-\mu(y)\mathbf{1})/\sigma(y)$, and orthogonal projector $P_\mu = I - \tfrac{1}{d}\mathbf{1}\mathbf{1}^\top$ onto the orthogonal complement of $\mathbf{1}$. Then:
\begin{enumerate}[leftmargin=*,itemsep=2pt]
\item[(i)] (Annihilation of the constant direction.) $J_{\LN}(y)\mathbf{1}=0$. Equivalently, the constant subspace $\mathrm{span}\{\mathbf{1}\}$ lies in the kernel of $J_{\LN}(y)$.
\item[(ii)] (Bounded action on the centered subspace.) For every $v\in\R^d$ with $P_\mu v = v$ (i.e.\ $\sum_j v_j=0$),
\[
\|J_{\LN}(y)\,v\|_2 \;\le\; \frac{\|\gamma\|_\infty}{\sigma(y)}\,\|v\|_2.
\]
\item[(iii)] (Additional contraction along $\hat{y}$.) Define $\rho(y):=1-\|\hat{y}\|_2^2/d \in [0,1]$. Then
\[
\|J_{\LN}(y)\,\hat{y}\|_2 \;\le\; \frac{\|\gamma\|_\infty}{\sigma(y)}\,\rho(y)\,\|\hat{y}\|_2,
\]
i.e.\ on the direction $\hat{y}$ the part-(ii) bound is tightened by an extra factor $\rho(y)$, with $\rho(y)<1$ whenever $\hat{y}\neq 0$. Whether $\hat{y}$ is contractive in absolute terms depends on the product $\rho(y)\,\|\gamma\|_\infty/\sigma(y)$.
\item[(iv)] (Geometric decay of products.) For a sequence of LayerNorm Jacobians $J_{\LN,k}$ with scale vectors $\gamma_k$, stability floors $\epsilon_k$, and standard deviations $\sigma_k(y_k)$, evaluated at token inputs $y_\ell,y_{\ell+1},\ldots,y_{m-1}$, and any $u\in\R^d$,
\[
\Bigl\|\prod_{k=\ell}^{m-1}J_{\LN,k}(y_k)\,u\Bigr\|_2 \;\le\; \prod_{k=\ell}^{m-1}\frac{\|\gamma_k\|_\infty}{\sigma_k(y_k)}\;\|u\|_2.
\]
If in addition there exists $\eta\in(0,1)$ such that $\|\gamma_k\|_\infty/\sigma_k(y_k)\le 1-\eta$ at every $k$, then the right-hand side is at most $(1-\eta)^{m-\ell}\|u\|_2$, i.e.\ the product decays geometrically in depth. (Whether this hypothesis holds in practice is an empirical question that our experiments do not test; see the discussion after the proof.)
\end{enumerate}
\end{lemma}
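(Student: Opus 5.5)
The plan is to work directly from the closed-form Jacobian already derived in the proof of Lemma~\ref{lem:ln-lip}:
\[
J_{\LN}(y)=\frac{1}{\sigma(y)}\Diag(\gamma)\,M(y),\qquad M(y):=P_\mu-\frac{\hat{y}\hat{y}^\top}{d}.
\]
First I will re-derive this formula briefly as a check. Differentiating $1/\sigma$ with $\sigma^2=\frac1d\|y-\mu\mathbf{1}\|_2^2+\epsilon$ produces exactly the term $-(y-\mu\mathbf{1})(y-\mu\mathbf{1})^\top/(d\sigma^3)$, and $\epsilon$ enters only through $\sigma$. Every part then reduces to a spectral description of the symmetric matrix $M(y)$, followed by the crude bound $\|\Diag(\gamma)\|_2=\|\gamma\|_\infty$.

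The key structural step is to record the spectrum of $M(y)$. Since $\hat{y}^\top\mathbf{1}=0$, $\hat{y}$ lies in the range of $P_\mu$. I decompose $\R^d=\mathrm{span}\{\mathbf{1}\}\oplus\mathrm{span}\{\hat{y}\}\oplus W$, where $W$ is the part of the centered subspace orthogonal to $\hat{y}$ (if $\hat y=0$, drop the middle summand). On these three pieces $M$ acts, respectively, as
\begin{itemize}[leftmargin=*,itemsep=2pt]
\item $0$ on $\mathbf{1}$, since $P_\mu\mathbf{1}=0$ and $\hat{y}^\top\mathbf{1}=0$;
\item the scalar $1-\|\hat{y}\|_2^2/d=\rho(y)$ on $\hat{y}$;
\item the identity on $W$.
\end{itemize}
The computation in Lemma~\ref{lem:ln-output} gives $\|\hat{y}\|_2^2=d(\sigma^2-\epsilon)/\sigma^2\le d$, so $\rho(y)\in[0,1]$, and $\rho(y)<1$ exactly when $\hat{y}\neq0$. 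Hence $M$ is symmetric with eigenvalues in $[0,1]$, and $\|M\|_2\le1$ on all of $\R^d$, not merely on the centered subspace.

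With this in hand, parts (i)--(iii) follow immediately:
\begin{itemize}[leftmargin=*,itemsep=2pt]
\item (i) $M\mathbf{1}=0$ implies $J_{\LN}(y)\mathbf{1}=0$.
\item (ii) For centered $v$, $\|J_{\LN}v\|_2\le\sigma^{-1}\|\gamma\|_\infty\|Mv\|_2\le\sigma^{-1}\|\gamma\|_\infty\|v\|_2$.
\item (iii) $M\hat{y}=\rho(y)\hat{y}$, so $\|J_{\LN}\hat{y}\|_2\le\sigma^{-1}\|\gamma\|_\infty\rho(y)\|\hat{y}\|_2$.
\end{itemize}

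For (iv) I will argue by induction on $m-\ell$ using submultiplicativity of the spectral norm, applying at each factor the global bound $\|J_{\LN,k}(y_k)\|_2\le\|\gamma_k\|_\infty/\sigma_k(y_k)$ that comes from $\|M\|_2\le1$. The geometric corollary then follows by substituting $1-\eta$ for each factor.

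The one point needing care, and the step I expect to be the main obstacle, is why (ii) cannot simply be iterated. The output of $J_{\LN,k}$ is $\Diag(\gamma_k)$ applied to a centered vector, which is generally \emph{not} centered when $\gamma_k$ is non-constant. The input to the next factor therefore need not lie in the centered subspace, and (ii) alone would not apply to it. I resolve this by using the full-space bound $\|M\|_2\le1$ established above rather than the centered-subspace statement. This also makes the bound valid for arbitrary $u\in\R^d$, and for the transposed products used in backpropagation.
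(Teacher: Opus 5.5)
Your proposal is correct and follows essentially the same route as the paper. It uses the same factorization $J_{\LN}(y)=\sigma(y)^{-1}\Diag(\gamma)\bigl(P_\mu-\hat y\hat y^\top/d\bigr)$ and the same arguments for (i) and (iii); for (iv), like the paper, you use the full-space bound $\|J_{\LN,k}(y_k)\|_2\le\|\gamma_k\|_\infty/\sigma_k(y_k)$ with submultiplicativity rather than iterating (ii), which handles the non-centered-output issue you flag. The only cosmetic difference is in (ii): the paper expands $\|v-\alpha\hat y\|_2^2=\|v\|_2^2-\alpha^2(2d-\|\hat y\|_2^2)$ with $\alpha=\hat y^\top v/d$, whereas your eigendecomposition over $\mathrm{span}\{\mathbf{1}\}\oplus\mathrm{span}\{\hat y\}\oplus W$ yields (ii) and the global bound $\|M\|_2\le 1$ together.
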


\begin{proof}
Write $J_{\LN}(y)=\frac{1}{\sigma(y)}\Diag(\gamma)\bigl(P_\mu - \hat{y}\hat{y}^\top/d\bigr)$ as in the proof of Lemma~\ref{lem:ln-lip}.

\emph{Part (i).} The factor $P_\mu$ satisfies $P_\mu\mathbf{1}=0$, and $\hat{y}\hat{y}^\top/d$ applied to $\mathbf{1}$ gives $\hat{y}(\hat{y}^\top\mathbf{1})/d=0$ since $\sum_j\hat{y}_j=0$ by construction. Hence $J_{\LN}(y)\mathbf{1}=0$.

\emph{Part (ii).} For $v$ with $P_\mu v = v$, $\hat{y}\hat{y}^\top v/d = (\hat{y}^\top v/d)\,\hat{y}$, so
\[
J_{\LN}(y)v \;=\; \frac{1}{\sigma(y)}\,\Diag(\gamma)\,\Bigl(v - \alpha\hat{y}\Bigr), \qquad \alpha := \frac{\hat{y}^\top v}{d}.
\]
By direct expansion, $\|v-\alpha\hat{y}\|_2^2 = \|v\|_2^2 - 2\alpha\,\hat{y}^\top v + \alpha^2\|\hat{y}\|_2^2 = \|v\|_2^2 - 2d\alpha^2 + \alpha^2\|\hat{y}\|_2^2 = \|v\|_2^2 - \alpha^2(2d-\|\hat{y}\|_2^2)$. Since $\|\hat{y}\|_2^2\le d$ (Lemma~\ref{lem:ln-output} proof), $2d-\|\hat{y}\|_2^2\ge d>0$, so $\|v-\alpha\hat{y}\|_2\le \|v\|_2$. Multiplication by $\Diag(\gamma)$ adds a factor of at most $\|\gamma\|_\infty$ in $\ell_2$ norm, and division by $\sigma(y)$ gives the stated bound.

\emph{Part (iii).} Specialize (ii) to $v=\hat{y}$. Then $\hat{y}^\top\hat{y}/d=\|\hat{y}\|_2^2/d$, so
\[
J_{\LN}(y)\hat{y} = \frac{1}{\sigma(y)}\Diag(\gamma)\Bigl(1-\frac{\|\hat{y}\|_2^2}{d}\Bigr)\hat{y} = \frac{\rho(y)}{\sigma(y)}\Diag(\gamma)\hat{y},
\]
giving the bound in (iii). With standard LayerNorm $\sigma(y)^2 = \tfrac{1}{d}\|y-\mu(y)\mathbf{1}\|_2^2 + \epsilon$ and $\epsilon>0$, $\|\hat{y}\|_2^2 = d(\sigma(y)^2-\epsilon)/\sigma(y)^2 \le d$, so $\rho(y) = \epsilon/\sigma(y)^2 \in (0,1]$, with $\rho(y)<1$ whenever $y$ is non-constant.

\emph{Part (iv).} As shown in the proof of Lemma~\ref{lem:ln-lip}, each module satisfies $\|J_{\LN,k}(y_k)\|_2 \le \|\gamma_k\|_\infty/\sigma_k(y_k)$ as an operator-norm bound on all of $\R^d$. Submultiplicativity of operator norms then yields $\|\prod_{k=\ell}^{m-1}J_{\LN,k}(y_k)\,u\|_2 \le \prod_{k=\ell}^{m-1}(\|\gamma_k\|_\infty/\sigma_k(y_k))\|u\|_2$ for any $u\in\R^d$. The geometric-decay statement is the immediate consequence of the per-step hypothesis $\|\gamma_k\|_\infty/\sigma_k(y_k)\le 1-\eta$.
\end{proof}

Lemma~\ref{lem:ln-projection} is a statement about a single token vector $y\in\R^d$. The transformer state is a matrix, and Section~\ref{sec:layers} applies the lemma at the sequence level; the following corollary supplies the lift.

\begin{corollary}[Sequence-level LayerNorm product bound]
\label{cor:ln-sequence}
For $Y\in\R^{L\times d}$ with rows $y_1,\ldots,y_L$, let $\mathcal{J}_{\LN,k}(Y)$ denote the Jacobian of the rowwise map $Y\mapsto\LN_k(Y)$; since LayerNorm acts on each token independently, $\mathcal{J}_{\LN,k}(Y)$ is block diagonal with token blocks $J_{\LN,k}(y_i)$. Define the per-token factors
\[
a_{k,i} \;:=\; \frac{\|\gamma_k\|_\infty}{\sigma_k(y_{k,i})},
\]
where $y_{k,i}$ is token $i$ of the layer-$k$ input $Y_k$. Then for states $Y_\ell,\ldots,Y_{m-1}$,
\[
\Bigl\|\prod_{k=\ell}^{m-1}\mathcal{J}_{\LN,k}(Y_k)\Bigr\|_2
\;\le\;
\max_{i\in[L]}\ \prod_{k=\ell}^{m-1} a_{k,i}
\;\le\;
\prod_{k=\ell}^{m-1}\ \max_{i\in[L]} a_{k,i}.
\]
In particular, if $\max_i a_{k,i}\le 1-\eta$ for every $k$ and some $\eta\in(0,1)$, the sequence-level LayerNorm-only product contracts geometrically, $\|\prod_k \mathcal{J}_{\LN,k}(Y_k)\|_2\le(1-\eta)^{m-\ell}$. The same bounds hold for the transposed product used in backpropagation.
\end{corollary}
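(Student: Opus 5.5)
The plan is to exploit the block-diagonal structure of each sequence-level Jacobian and then reduce everything to the per-token bound of Lemma~\ref{lem:ln-projection}(iv).

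First I will fix the vectorization. Stack $Y\in\R^{L\times d}$ token by token into a vector in $\R^{Ld}$, and use the Frobenius norm, so that the spectral norm $\|\cdot\|_2$ on $\R^{Ld}$ is the operator norm for this stacking. Because $\LN_k$ acts rowwise, each $\mathcal{J}_{\LN,k}(Y_k)=\Diag\bigl(J_{\LN,k}(y_{k,1}),\ldots,J_{\LN,k}(y_{k,L})\bigr)$ is block diagonal, with all blocks aligned on the same token partition for every $k$. A product of such matrices, taken in the paper's left-to-right later-layer order, is again block diagonal. Its $i$-th block is the ordered product $\prod_{k=\ell}^{m-1}J_{\LN,k}(y_{k,i})$ of token-$i$ blocks. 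This is just block-matrix multiplication, and I will state it as the first step.

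Second, I will use the fact that the spectral norm of a block-diagonal matrix equals the maximum of the spectral norms of its blocks. To see this, for $u=(u_1,\ldots,u_L)$ write $\|Mu\|_2^2=\sum_i\|M_iu_i\|_2^2\le \max_i\|M_i\|_2^2\sum_i\|u_i\|_2^2$; equality is attained by concentrating $u$ on the worst block. Applying Lemma~\ref{lem:ln-projection}(iv) to the token-$i$ sequence $y_{\ell,i},\ldots,y_{m-1,i}$ bounds the $i$-th block by $\prod_k a_{k,i}$. This gives the first inequality. The second inequality is elementary: $a_{k,i}\ge0$ and $a_{k,i}\le\max_{i'}a_{k,i'}$, so products of nonnegative numbers are monotone termwise. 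Under $\max_i a_{k,i}\le1-\eta$, the geometric bound $(1-\eta)^{m-\ell}$ then follows immediately.

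Third, for the transposed product I will note two facts. The transpose of a block-diagonal matrix is block diagonal, with blocks equal to the transposed token products. The spectral norm is also invariant under transposition. So the same bounds hold verbatim.

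There is no real obstacle; the only point needing care is the bookkeeping in the first step. The block partition must be the same across all $k$, which holds because the token count $L$ and width $d$ are fixed. Token $i$'s block must also use the inputs $y_{k,i}$ from each layer's own state $Y_k$, not a single fixed $Y$; this is what makes $\max_i\prod_k$ rather than $\prod_k\max_i$ the sharp first bound.
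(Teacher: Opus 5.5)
Your proposal is correct and follows the paper's proof essentially step for step. Both arguments use the block-diagonal product with token blocks $\prod_k J_{\LN,k}(y_{k,i})$, the fact that the spectral norm of a block-diagonal matrix is its maximum block norm, Lemma~\ref{lem:ln-projection}(iv) applied tokenwise, termwise monotonicity for the second inequality, and transpose-invariance of the spectral norm; your explicit stacking convention for $\|\cdot\|_2$ and your short proof of the block-norm identity are details the paper leaves implicit.
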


\begin{proof}
A product of block-diagonal matrices with common block structure is block diagonal with blocks equal to the products of the corresponding token blocks, so $\prod_{k}\mathcal{J}_{\LN,k}(Y_k)$ has token blocks $\prod_{k}J_{\LN,k}(y_{k,i})$. The spectral norm of a block-diagonal matrix is the maximum block spectral norm, and each token block obeys $\|\prod_k J_{\LN,k}(y_{k,i})\|_2\le\prod_k a_{k,i}$ by Lemma~\ref{lem:ln-projection}(iv) applied tokenwise. The second inequality bounds each token's product by the product of per-layer maxima, and transposition preserves spectral norms.
\end{proof}

\paragraph{Application to post-LN gradient flow.}
Theorem~\ref{thm:postln-gradient}(iii) writes the end-to-end Jacobian as $\prod_k \mathcal{J}_{\LN,k}(Y_k)+\bigl(\text{terms involving }J_{\MHA}\bigr)$, and we call the first term the LayerNorm-only term. Corollary~\ref{cor:ln-sequence} shows that this term contracts geometrically at the sequence level whenever the per-layer worst-token factor $\max_i a_{k,i}=\max_i\|\gamma_k\|_\infty/\sigma_k(y_{k,i})$ is bounded away from $1$ at every layer; this is the quantity one would need to measure to test the condition empirically. The corollary controls only the LayerNorm-only term; it does not bound the complete sum of terms containing MHA Jacobians, and therefore does not prove contraction of the full post-LN end-to-end Jacobian. Because the bounds are spectral-norm bounds, they apply equally to the transposed products used in backpropagation.

\section{Proof of Theorem~\ref{thm:softmax}: Exact Softmax Jacobian Norm}
\label{app:softmax}

This appendix provides the proof of Theorem~\ref{thm:softmax} from Section~\ref{sec:softmax}, which establishes the exact $\ell_\infty \to \ell_1$ operator norm of the softmax Jacobian.

\begin{proof}[Proof of Theorem~\ref{thm:softmax}]
Write $J := J_\tau(u) = \frac{1}{\tau}(\Diag(p) - pp^\top)$ with $p = s_\tau(u)$; the form of $J$ follows from the standard softmax Jacobian by the chain rule applied to $u\mapsto u/\tau$. The $\ell_\infty \to \ell_1$ operator norm is $\|J\|_{\infty \to 1} = \max_{\|x\|_\infty \le 1} \|Jx\|_1$. Since $\|Jx\|_1$ is a convex function of $x$ and the $\ell_\infty$ ball is a polytope, the maximum is achieved at a vertex, i.e., at some $x \in \{\pm 1\}^L$. Indeed, any point of the $\ell_\infty$ unit ball is a convex combination of sign vertices; convexity therefore gives $\|Jx\|_1\le\max_{v\in\{\pm1\}^L}\|Jv\|_1$.

For any vertex $x$, let $S = \{i : x_i = +1\}$ denote the positive indices and $\bar{S} = [L] \setminus S$ the negative indices. Define 
\[
\alpha = p^\top x = \sum_{i \in S} p_i - \sum_{i \in \bar{S}} p_i = p(S) - (1 - p(S)) = 2p(S) - 1.
\]

The Jacobian $J = \frac{1}{\tau}(\Diag(p) - pp^\top)$ acts on $x$ as:
\[
(Jx)_i = \frac{1}{\tau}\left(p_i x_i - p_i (p^\top x)\right) = \frac{p_i}{\tau}(x_i - \alpha).
\]

We compute $(Jx)_i$ separately for each case:
\begin{itemize}
    \item For $i \in S$: $x_i = +1$, so $(Jx)_i = \frac{p_i}{\tau}(1 - \alpha) = \frac{p_i}{\tau}(1 - (2p(S) - 1)) = \frac{2p_i}{\tau}(1 - p(S))$.
    \item For $i \in \bar{S}$: $x_i = -1$, so $(Jx)_i = \frac{p_i}{\tau}(-1 - \alpha) = \frac{p_i}{\tau}(-1 - (2p(S) - 1)) = \frac{-2p_i}{\tau}p(S)$.
\end{itemize}

Since $p(S) \in [0,1]$, we have $\alpha = 2p(S) - 1 \in [-1, 1]$. This ensures $(1 - \alpha) \ge 0$ and $(1 + \alpha) \ge 0$, so the signs are determined by the subset membership. Computing the $\ell_1$ norm:
\begin{align*}
\|Jx\|_1 &= \sum_{i \in S} \left|\frac{2p_i}{\tau}(1-p(S))\right| + \sum_{i \in \bar{S}} \left|\frac{-2p_i}{\tau}p(S)\right| \\
&= \frac{2(1-p(S))}{\tau} \sum_{i \in S} p_i + \frac{2p(S)}{\tau} \sum_{i \in \bar{S}} p_i \\
&= \frac{2(1-p(S))}{\tau} \cdot p(S) + \frac{2p(S)}{\tau} \cdot (1 - p(S)) \\
&= \frac{4}{\tau} p(S)(1-p(S)).
\end{align*}

Since vertices $x \in \{\pm 1\}^L$ correspond bijectively to subsets $S \subseteq [L]$, maximizing over all vertices is equivalent to maximizing over all subsets:
\begin{align*}
\|J\|_{\infty \to 1} &= \max_{S \subseteq [L]} \frac{4}{\tau} p(S)(1-p(S)) \\
&= \frac{4}{\tau} \max_{S \subseteq [L]} p(S)(1-p(S)) = \frac{\theta(p)}{\tau}.
\end{align*}

\paragraph{Attainment.} Let $S^*$ be any subset achieving the maximum in the definition of $\theta(p)$, and define $x^* \in \{\pm 1\}^L$ by $x^*_i = +1$ if $i \in S^*$ and $x^*_i = -1$ otherwise. Then $\|x^*\|_\infty = 1$ and
\[
\|Jx^*\|_1 = \frac{4}{\tau} p(S^*)(1-p(S^*)) = \frac{\theta(p)}{\tau}.
\]
Thus the maximum is attained by $x^*$.
\end{proof}

\begin{remark}[Connection to total variation]
Let $p_\varepsilon=\softmax((u+\varepsilon x)/\tau)$. Then
\[
\operatorname{TV}(p_\varepsilon,p) \;=\; \frac{|\varepsilon|}{2}\,\|Jx\|_1 + o(|\varepsilon|).
\]
Thus $\|Jx\|_1$ is twice the first-order rate of probability-mass redistribution under the logit perturbation $x$, and the worst-case perturbation $x^*$ identifies the optimal bisection of probability mass.
\end{remark}

\begin{remark}[Comparison to same-norm bounds]
The softmax Jacobian $\Diag(p)-pp^\top$ and its basic Lipschitz properties are classical \citep{gao2017properties}. Distribution-independent same-norm bounds have since been sharpened, culminating in the tight constant $1/(2\tau)$ for the tempered softmax uniformly across all $\ell_p\to\ell_p$ norms \citep{nair2025softmax}. Such same-norm statements and the present cross-norm identity concern different induced norms and are not directly ordered. Our contribution is an exact, distribution-dependent characterization in the cross-norm $\ell_\infty\to\ell_1$ geometry, whereas the cited results are distribution-independent global constants in same-norm geometries.
\end{remark}
\begin{proof}[Proof of Corollary~\ref{cor:regimes}]
We verify each case by computing $\theta(p) = 4\max_{S \subseteq [L]} p(S)(1-p(S))$.

\textbf{Case 1: Uniform distribution.} Let $p = \frac{1}{L}\mathbf{1}$. For any subset $S$ with $|S| = k$, we have $p(S) = k/L$. The function $g(t) = t(1-t)$ is maximized at $t = 1/2$. 

If $L$ is even, choosing $|S| = L/2$ gives $p(S) = 1/2$, so $\theta(p) = 4 \cdot \frac{1}{2} \cdot \frac{1}{2} = 1$.

If $L$ is odd, the attainable subset masses are $k/L$ for integer $k$, and $g(k/L)$ is maximized over integers at $k = \lfloor L/2 \rfloor$ (equivalently $k=\lceil L/2\rceil$), since $g$ increases on $[0,1/2]$ and decreases on $[1/2,1]$. Hence
\[
\theta(p) = 4 \cdot \frac{\lfloor L/2 \rfloor}{L} \cdot \frac{\lceil L/2 \rceil}{L} = 1 - 1/L^2 \quad\text{for odd } L.
\]
In particular, $\theta(p) = 1$ for even $L$ and $\theta(p) = 1 - 1/L^2$ for odd $L$; the uniform case is exact at $\theta=1$ only when $L$ is even, and asymptotically maximal otherwise.

\textbf{Case 2: One-hot distribution.} Let $p = e_i$ (all mass on token $i$). For any subset $S$:
\[
p(S) = \begin{cases} 1 & \text{if } i \in S \\ 0 & \text{if } i \notin S \end{cases}
\]
Thus $p(S)(1-p(S)) = 1 \cdot 0 = 0$ or $0 \cdot 1 = 0$ for all $S$, giving $\theta(p) = 0$.

\textbf{Case 3: Dominant atom.} Suppose the top token has mass $p_1 = 1-\kappa$ with $\kappa\le 1/2$, and the remaining mass $\kappa$ is distributed among other tokens. For any subset $S$:
\begin{itemize}
    \item If $1 \in S$: $p(S) \ge 1-\kappa$, so $p(S)(1-p(S)) \le (1-\kappa)\kappa$.
    \item If $1 \notin S$: $p(S) \le \kappa$, so $p(S)(1-p(S)) \le \kappa(1-\kappa)$.
\end{itemize}
In both cases, $p(S)(1-p(S)) \le \kappa(1-\kappa)$, giving $\theta(p) \le 4\kappa(1-\kappa)$. The bound is achieved by taking $S$ to be the set of all non-top tokens, so $p(S)=\kappa$ and therefore $\theta(p)=4\kappa(1-\kappa)$ exactly.

\textbf{Case 4: Top-$k$ uniform distribution.} Suppose $p_i = 1/k$ for $i \in \{1, \ldots, k\}$ and $p_i = 0$ otherwise. Any subset $S$ satisfies $p(S) = |S \cap \{1,\ldots,k\}|/k$. Let $m = |S \cap \{1,\ldots,k\}| \in \{0, 1, \ldots, k\}$.

We maximize $g(m/k) = (m/k)(1 - m/k)$ over integer $m \in \{0, \ldots, k\}$. The continuous maximum of $t(1-t)$ is at $t = 1/2$. The closest integer solutions are $m = \lfloor k/2 \rfloor$ and $m = \lceil k/2 \rceil$.

Taking $m^* = \lfloor k/2 \rfloor$:
\[
p(S^*) = \frac{\lfloor k/2 \rfloor}{k}, \quad 1 - p(S^*) = \frac{k - \lfloor k/2 \rfloor}{k} = \frac{\lceil k/2 \rceil}{k}.
\]

The maximum value is:
\[
\max_S p(S)(1-p(S)) = \frac{\lfloor k/2 \rfloor}{k} \cdot \frac{\lceil k/2 \rceil}{k}.
\]

For even $k$: $\lfloor k/2 \rfloor = \lceil k/2 \rceil = k/2$, so the maximum is $(1/2)(1/2) = 1/4$, giving $\theta(p) = 1$.

For odd $k$: $\lfloor k/2 \rfloor = (k-1)/2$ and $\lceil k/2 \rceil = (k+1)/2$, so the maximum is $\frac{(k-1)/2}{k} \cdot \frac{(k+1)/2}{k} = \frac{k^2-1}{4k^2}$.

To express this uniformly, note that the partition gap is $\delta = |p(S^*) - 1/2| = |\lfloor k/2 \rfloor/k - 1/2|$. For even $k$, $\delta = 0$. For odd $k$, $\delta = 1/(2k)$.

Using $\theta(p) = 1 - 4\delta^2$ (the equivalent characterization):
\[
\theta(p) = 1 - 4\left(\frac{1}{2} - \frac{\lfloor k/2 \rfloor}{k}\right)^2 = 1 - \left(1 - \frac{2\lfloor k/2 \rfloor}{k}\right)^2.
\]

\textbf{Case 5: Maximum-weight lower bound.} Order the probabilities arbitrarily and let $c_k$ be the first cumulative sum to cross $1/2$: $c_{k-1} < 1/2 \le c_k = c_{k-1} + p_k$. The interval $[c_{k-1}, c_k]$ contains $1/2$ and has length $p_k \le p_{\max}$, so at least one of $c_{k-1}$ and $c_k$ is within $p_k/2 \le p_{\max}/2$ of $1/2$. Hence there exists a subset mass $q$ (a prefix sum of the chosen ordering) satisfying $|2q-1| \le p_{\max}$, and therefore
\[
\theta(p) \;\ge\; 4q(1-q) \;=\; 1-(2q-1)^2 \;\ge\; 1 - p_{\max}^2.
\]
\end{proof}
\section{Proofs for Section~\ref{sec:layers}: Layerwise Stability}
\label{app:layers}

This appendix provides proofs for the layerwise stability results in Section~\ref{sec:layers}. We organize the proofs to follow the main text: first the MHA local Jacobian bound and its bounded-domain corollary, then per-layer Lipschitz bounds for both architectures, and finally gradient flow analysis.

\subsection{Proof of Theorem~\ref{thm:mha}: MHA Local Jacobian Bound}

\begin{proof}
We bound the induced norm of the differential $D\MHA(U)$ by decomposing it into value and attention-weight pathway contributions. Throughout, $H$ denotes a tangent direction in $\R^{L\times d}$, and we use the unnormalized row norm $\|Z\|_{\infty,2}=\max_i\|z_i\|_2$ of Section~\ref{sec:framework} for the $d_h$-width head matrices, together with the transfer inequality $\|XW\|_{\infty,2} \le \|X\|_{\infty,\mathrm{rms}}\sqrt{d}\,\|W\|_2$ for $X\in\R^{L\times d}$.

For head $h$, the differential of $O_h(U) = A_h(U)\,V_h(U)$ with $V_h(U)=UW_h^V$ is
\[
D O_h(U)[H] \;=\; A_h(U)\,(H W_h^V) \;+\; D A_h(U)[H]\; V_h(U).
\]

\paragraph{Value pathway.} By Lemma~\ref{lem:stochastic} applied at head width, $\|A_h Z\|_{\infty,2} \le \|Z\|_{\infty,2}$ for row-stochastic $A_h$. Hence
\[
\|A_h(U)(H W_h^V)\|_{\infty,2} \le \|H W_h^V\|_{\infty,2} \le \|H\|_{\infty,\mathrm{rms}}\sqrt{d}\,\|W_h^V\|_2.
\]

\paragraph{Attention-weight pathway.} The scores are $S_h(U) = (UW_h^Q)(UW_h^K)^\top/\sqrt{d_h}$. For a tangent direction $H$, the differential of $S_h$ is
\[
D S_h(U)[H] = \frac{1}{\sqrt{d_h}}\bigl[(H W_h^Q)(UW_h^K)^\top + (UW_h^Q)(H W_h^K)^\top\bigr].
\]
Entrywise,
\begin{align*}
|(D S_h(U)[H])_{ij}| &\le \frac{1}{\sqrt{d_h}}\bigl(\|H W_h^Q\|_{\infty,2} \|UW_h^K\|_{\infty,2} + \|UW_h^Q\|_{\infty,2} \|H W_h^K\|_{\infty,2}\bigr) \\
&\le \frac{2d}{\sqrt{d_h}} \|W_h^Q\|_2 \|W_h^K\|_2 \|H\|_{\infty,\mathrm{rms}} \|U\|_{\infty,\mathrm{rms}},
\end{align*}
so the same bound holds for $\|D S_h(U)[H]\|_{\infty,\infty}$ (maximum absolute entry).

Row $i$ of $A_h(U)$ is the tempered softmax $s_\tau$ of row $i$ of $S_h(U)$. By Theorem~\ref{thm:softmax}, the row Jacobian satisfies $\|J_\tau(S_h[i,:])\|_{\infty\to 1} = \theta(A_h[i,:])/\tau$, hence by the chain rule
\[
\|D A_h(U)[H][i,:]\|_1 \le \frac{\theta(A_h[i,:])}{\tau}\,\|D S_h(U)[H][i,:]\|_\infty \le \frac{\tilde{\theta}_h(U)}{\tau}\,\|D S_h(U)[H]\|_{\infty,\infty}.
\]
For the action on values, each row of $(D A_h[H])\,V_h$ is $\sum_j (D A_h[H])_{ij}\,(V_h)_j$, so
\[
\|D A_h(U)[H]\,V_h\|_{\infty,2} \le \|D A_h(U)[H]\|_{\infty,1}\,\|V_h\|_{\infty,2},
\]
with $\|D A_h(U)[H]\|_{\infty,1} = \max_i \|D A_h(U)[H][i,:]\|_1$ and $\|V_h\|_{\infty,2} \le \|U\|_{\infty,\mathrm{rms}}\sqrt{d}\,\|W_h^V\|_2$. Combining the three displays,
\[
\|D A_h(U)[H]\,V_h\|_{\infty,2} \;\le\; \frac{\tilde{\theta}_h(U)}{\tau}\cdot\frac{2d\,\|U\|_{\infty,\mathrm{rms}}^2}{\sqrt{d_h}}\,\|W_h^Q\|_2\|W_h^K\|_2\|W_h^V\|_2\cdot \|H\|_{\infty,\mathrm{rms}}\sqrt{d}.
\]
Writing $\bar{B}_U = \|U\|_{\infty,\mathrm{rms}}\sqrt{d}$, the right-hand side equals $\frac{\tilde{\theta}_h(U)}{\tau}\cdot\frac{2\bar{B}_U^2}{\sqrt{d_h}}\,\|W_h^Q\|_2\|W_h^K\|_2\|W_h^V\|_2\cdot\|H\|_{\infty,\mathrm{rms}}\sqrt{d}$.

\paragraph{Concatenation and output projection.} For head-output perturbations $Z_h\in\R^{L\times d_h}$,
\[
\|[Z_1\|\cdots\|Z_H]\|_{\infty,2} \;\le\; \sum_{h=1}^H \|Z_h\|_{\infty,2},
\]
since each row of the concatenation has $\ell_2$ norm $\bigl(\sum_h\|z_{h,i}\|_2^2\bigr)^{1/2}\le\sum_h\|z_{h,i}\|_2$. Applying $W^O$ (a factor $\|W^O\|_2$ on each row) and dividing by $\sqrt{d}$ to return to the block-$\infty$/RMS norm gives
\[
\|D\MHA(U)[H]\|_{\infty,\mathrm{rms}} \le \|W^O\|_2 \sum_{h=1}^H \left(\|W_h^V\|_2 + \frac{\tilde{\theta}_h(U)}{\tau} \cdot \frac{2\bar{B}_U^2}{\sqrt{d_h}}\|W_h^Q\|_2\|W_h^K\|_2\|W_h^V\|_2\right)\|H\|_{\infty,\mathrm{rms}},
\]
which is the stated bound on $\Lambda_{\MHA}(U)$. The bound depends on weight norms, $\bar{B}_U$, $d$, $d_h$, $\tau$, and $\tilde{\theta}_h(U) \le 1$. It carries no explicit sequence-length factor because the block-$\infty$/RMS geometry measures worst-case token magnitude rather than aggregating across tokens.
\end{proof}

\begin{proof}[Proof of Corollary~\ref{cor:mha-bounded}]
The set $\mathcal{D}_B$ is convex. For $U, U' \in \mathcal{D}_B$, define $U_t = U + t(U'-U)$ for $t\in[0,1]$. Then
\[
\MHA(U') - \MHA(U) = \int_0^1 D\MHA(U_t)[U'-U]\,dt.
\]
Every $U_t$ lies in $\mathcal{D}_B$, so Theorem~\ref{thm:mha} applies at each $U_t$ with $\bar{B}_{U_t}^2 \le dB^2$, and $\tilde{\theta}_h(U_t) \le 1$ since $\theta$ takes values in $[0,1]$. Taking block-$\infty$/RMS norms inside the integral yields the stated uniform constant $L_{\MHA}^{(B)}$.
\end{proof}

\subsection{Proof of Theorem~\ref{thm:preln-full}: Pre-LN Full Layer Bound}

\begin{proof}
A complete pre-LN transformer layer consists of two sublayers:
\begin{align*}
Z_\ell &= X_\ell + \MHA(\LN(X_\ell)), \\
X_{\ell+1} &= Z_\ell + \FFN(\LN(Z_\ell)).
\end{align*}

\paragraph{Bounded input magnitude.} In pre-LN, the MHA sublayer receives input $U = \LN(X_\ell)$. By Lemma~\ref{lem:ln-output}, for any input $X_\ell$, regardless of $\|X_\ell\|_{\infty,\mathrm{rms}}$,
\[
\|\LN(X_\ell)\|_{\infty,\mathrm{rms}} \le \|\gamma\|_\infty + \|\beta\|_\infty \le B_{\LN},
\]
so the MHA input lies in the bounded domain $\mathcal{D}_{B_{\LN}}$ of Corollary~\ref{cor:mha-bounded}, independently of layer index $\ell$, total depth $N$, sequence length $L$, and residual stream magnitude $\|X_\ell\|$.

\paragraph{MHA sublayer.} On this domain, the map $X_\ell \mapsto Z_\ell = X_\ell + \MHA(\LN(X_\ell))$ has Lipschitz constant:
\[
L_{\mathrm{MHA-sub}} \le 1 + \operatorname{Lip}(\MHA;\mathcal{D}_{B_{\LN}}) \cdot \mathrm{Lip}(\LN) \le 1 + L_{\MHA,\ell}^{(B_{\LN})} \cdot L_{\LN}.
\]

\paragraph{FFN sublayer.} The map $Z_\ell \mapsto X_{\ell+1} = Z_\ell + \FFN(\LN(Z_\ell))$ has Lipschitz constant:
\[
L_{\mathrm{FFN-sub}} \le 1 + L_{\FFN,\ell} \cdot L_{\LN}.
\]

For a standard two-layer network $\FFN(x) = W_2 \sigma(W_1 x + b_1) + b_2$, $L_{\FFN,\ell}$ is bounded by $\|W_2\|_2\,\mathrm{Lip}(\sigma)\,\|W_1\|_2$; for $1$-Lipschitz activations (e.g.,\ ReLU) this reduces to $\|W_2\|_2\|W_1\|_2$.

\paragraph{Full layer.} The composition has Lipschitz constant:
\begin{align*}
L_{\mathrm{layer},\ell}^{\mathrm{pre}} &\le L_{\mathrm{MHA-sub}} \cdot L_{\mathrm{FFN-sub}} \\
&= (1 + L_{\LN} \cdot L_{\MHA,\ell}^{(B_{\LN})})(1 + L_{\LN} \cdot L_{\FFN,\ell}).
\end{align*}

The constant $L_{\MHA,\ell}^{(B_{\LN})}$ depends on the layer-$\ell$ weight norms, $B_{\LN}$, $d$, $d_h$, and $\tau$, but carries no explicit dependence on $N$ or $L$; by Lemma~\ref{lem:ln-lip}, each LayerNorm module's Lipschitz constant is bounded by $L_{\LN}$. If the relevant weight norms and LayerNorm parameters are uniformly bounded across layers, the right-hand side is uniform in $\ell$.
\end{proof}

\subsection{Proof of Theorem~\ref{thm:postln-full}: Post-LN Full Layer Bound}

\begin{proof}
A complete post-LN transformer layer consists of two sublayers:
\begin{align*}
Z_\ell &= \LN(X_\ell + \MHA(X_\ell)), \\
X_{\ell+1} &= \LN(Z_\ell + \FFN(Z_\ell)).
\end{align*}

\paragraph{Input magnitude analysis.} In post-LN, for $\ell \ge 1$ the input $X_\ell$ to layer $\ell$ is the output of the LayerNorm from layer $\ell - 1$. By Lemma~\ref{lem:ln-output}:
\[
\|X_\ell\|_{\infty,\mathrm{rms}} \le \|\gamma\|_\infty + \|\beta\|_\infty \le B_{\LN}.
\]
Thus the MHA input lies in $\mathcal{D}_{B_{\LN}}$ and Corollary~\ref{cor:mha-bounded} applies with constant $L_{\MHA,\ell}^{(B_{\LN})}$. The first layer ($\ell=0$) requires a separate bound on its input magnitude; the stated theorem restricts to $\ell\ge1$.

\paragraph{MHA sublayer.} On this domain, the map $X_\ell \mapsto Z_\ell = \LN(X_\ell + \MHA(X_\ell))$ has Lipschitz constant:
\[
L_{\mathrm{MHA-sub}} \le L_{\LN} \cdot (1 + L_{\MHA,\ell}^{(B_{\LN})}).
\]

\paragraph{FFN sublayer.} The map $Z_\ell \mapsto X_{\ell+1} = \LN(Z_\ell + \FFN(Z_\ell))$ has Lipschitz constant:
\[
L_{\mathrm{FFN-sub}} \le L_{\LN} \cdot (1 + L_{\FFN,\ell}).
\]

\paragraph{Full layer.} The composition has Lipschitz constant:
\begin{align*}
L_{\mathrm{layer},\ell}^{\mathrm{post}} &\le L_{\mathrm{MHA-sub}} \cdot L_{\mathrm{FFN-sub}} \\
&= L_{\LN}^2 (1 + L_{\MHA,\ell}^{(B_{\LN})})(1 + L_{\FFN,\ell}).
\end{align*}

This depends on the layer-$\ell$ weight norms, LayerNorm parameters, and architectural constants, with no explicit dependence on $N$ or $L$.
\end{proof}

\subsection{Proof of Theorem~\ref{thm:preln-gradient}: Pre-LN Identity Gradient Path}

\begin{proof}
We prove each part in sequence.

\paragraph{Part (i): Bounded sublayer Jacobian.}
The pre-LN layer computes $X_{\ell+1} = X_\ell + \MHA(\LN(X_\ell))$. By the chain rule, the layer Jacobian is $I + J_k$ where $J_k := J_{\MHA}(\LN(X_k)) \cdot J_{\LN}(X_k)$.

By Lemma~\ref{lem:ln-output}, $\LN(X_k)\in\mathcal{D}_{B_{\LN}}$, so Theorem~\ref{thm:mha} with $\tilde\theta_h\le1$ bounds $\|J_{\MHA}(\LN(X_k))\|$ by $L_{\MHA,k}^{(B_{\LN})}$. By Lemma~\ref{lem:ln-lip}, $\|J_{\LN}(X_k)\| \le L_{\LN}$. Composing under the block-$\infty$/RMS norm,
\[
\|J_k\|_{\infty,\mathrm{rms}\to\infty,\mathrm{rms}} \;\le\; L_{\MHA,k}^{(B_{\LN})} \cdot L_{\LN} \;=:\; C_k,
\]
with no explicit dependence on $N$ or $L$.

\paragraph{Part (ii): End-to-end Jacobian expansion.}
For $\ell < m$:
\[
\frac{\partial X_m}{\partial X_\ell} = \prod_{k=\ell}^{m-1} \frac{\partial X_{k+1}}{\partial X_k} = \prod_{k=\ell}^{m-1} (I + J_k).
\]
Expanding:
\begin{align*}
\prod_{k=\ell}^{m-1} (I + J_k) &= I + \sum_{k=\ell}^{m-1} J_k + \sum_{\ell \le k_1 < k_2 < m} J_{k_2} J_{k_1} + \cdots \\
&= I + \sum_{k=\ell}^{m-1} J_k + R_{\ge 2},
\end{align*}
where $R_{\ge 2}$ is the exact sum of all ordered products containing at least two sublayer Jacobians. The identity matrix $I$ appears as an additive term.

\paragraph{Part (iii): Structural read-off.}
Substituting $g$ as the gradient signal at layer $m$ into the expansion of part (ii): $g^\top \partial X_m/\partial X_\ell = g^\top + g^\top\sum_{k=\ell}^{m-1} J_k + g^\top R_{\ge 2}$. The first term is $g$ itself, transmitted through the residual connections without multiplication by any sublayer Jacobian; the second is a sum of contributions from each $J_k$ taken individually. This identifies a distinguished identity path in the expansion. It is a structural statement, not a norm lower bound: the terms $g^\top\sum_k J_k + g^\top R_{\ge 2}$ may cancel part or all of the identity contribution, as in the alternating-sign scalar example following Theorem~\ref{thm:preln-gradient}. Contrast with post-LN (Theorem~\ref{thm:postln-gradient}), where the expansion contains no bare additive identity term and the LayerNorm-only term is subject to the sufficient contraction condition of Corollary~\ref{cor:ln-sequence}.
\end{proof}

\subsection{Proof of Theorem~\ref{thm:postln-gradient}: Post-LN No Identity Gradient Path}

\begin{proof}
We prove each part in sequence.

\paragraph{Part (i): End-to-end Jacobian formula.}
The post-LN layer computes $X_{\ell+1} = \LN(Y_\ell)$ where $Y_\ell = X_\ell + \MHA(X_\ell)$. By the chain rule:
\[
\frac{\partial X_{\ell+1}}{\partial X_\ell} = J_{\LN}(Y_\ell) \cdot \frac{\partial Y_\ell}{\partial X_\ell} = J_{\LN}(Y_\ell) \cdot (I + J_{\MHA}(X_\ell)).
\]
For $\ell < m$, chaining these derivatives:
\[
\frac{\partial X_m}{\partial X_\ell} = \prod_{k=\ell}^{m-1} J_{\LN}(Y_k) (I + J_{\MHA}(X_k)).
\]

\paragraph{Part (ii): LayerNorm Jacobian is projection-like.}
The LayerNorm Jacobian at $y \in \R^d$ is:
\[
J_{\LN}(y) = \frac{\Diag(\gamma)}{\sigma(y)}\left(I - \frac{\mathbf{1}\mathbf{1}^\top}{d} - \frac{(y-\mu\mathbf{1})(y-\mu\mathbf{1})^\top}{d\sigma(y)^2}\right),
\]
where $\mu = \frac{1}{d}\sum_j y_j$ and $\sigma^2 = \frac{1}{d}\sum_j(y_j - \mu)^2 + \epsilon$.

This matrix has two key properties:
\begin{itemize}
\item It annihilates the constant direction: $J_{\LN}(y)\mathbf{1} = 0$ (from $I - \frac{\mathbf{1}\mathbf{1}^\top}{d}$).
\item It is bounded on the centered subspace and has an extra $\rho(y)$ shrinkage factor when the input direction is $\hat y$ (Lemma~\ref{lem:ln-projection}).
\end{itemize}
Thus $J_{\LN}(y) \neq I + \text{small perturbation}$; it is projection-like rather than identity-like.

\paragraph{Part (iii): No additive identity term.}
Expanding the product over the choices $\{I, J_{\MHA}(X_k)\}$ yields $2^{m-\ell}$ ordered terms. Every term contains one LayerNorm Jacobian from every residual sublayer, in the original order. The unique term obtained by choosing $I$ from every bracket is
\[
\mathcal{J}_{\LN,m-1}(Y_{m-1})\cdots \mathcal{J}_{\LN,\ell}(Y_\ell),
\]
which we call the LayerNorm-only term. The expansion contains no bare additive identity term. Contrast this with pre-LN, where $\prod_{k=\ell}^{m-1}(I + J_k) = I + \sum_k J_k + R_{\ge 2}$ has $I$ as an additive term. The LayerNorm-only term is distinguished by containing zero MHA-Jacobian factors, not by dominating the remaining terms.

The LayerNorm-only term is governed at the sequence level by Corollary~\ref{cor:ln-sequence}: writing $a_{k,i}=\|\gamma_k\|_\infty/\sigma_k(y_{k,i})$ for token $i$ of the layer-$k$ input,
\[
\Bigl\|\prod_{k=\ell}^{m-1} \mathcal{J}_{\LN,k}(Y_k)\Bigr\|_2 \;\le\; \max_{i\in[L]}\prod_{k=\ell}^{m-1} a_{k,i} \;\le\; \prod_{k=\ell}^{m-1}\max_{i\in[L]} a_{k,i},
\]
which decays geometrically whenever each per-layer worst-token factor is bounded away from $1$. This controls only the LayerNorm-only term, not the full expansion.
\end{proof}
\section{Experimental Details and Additional Results}
\label{app:experiments}

This appendix provides full experimental details and additional results supporting Section~\ref{sec:experiments}.

\subsection{Experimental Setup}

\paragraph{Model architecture.}
We train GPT-2-style transformers with the following configuration:
\begin{itemize}[leftmargin=*,itemsep=2pt]
    \item Model dimension: $d = 1280$
    \item Number of layers: $N = 36$
    \item Attention heads: $H = 20$
    \item Head dimension: $d_h = 64$
    \item FFN hidden dimension: $d_{\text{ff}} = 5120$
    \item Context length: $L = 1024$
    \item Total parameters: 774M
    \item FFN activation: GELU
    \item Tokenizer: GPT-2 byte-level BPE, vocabulary $50{,}257$
    \item Dropout: $0.1$ on attention probabilities, the FFN output, and the embedding sum
    \item Linear biases: disabled in all projections (attention $W^Q,W^K,W^V,W^O$, both FFN layers, and the output head); only LayerNorm carries affine parameters
    \item Initialization: $\mathcal{N}(0,0.02^2)$ for all linear and embedding weights, with attention-output and FFN-output projections rescaled to std $0.02/\sqrt{2N}$; LayerNorm initialized to $\gamma=1,\beta=0$; input embedding and output head weights tied
\end{itemize}

\paragraph{Training configuration.}
\begin{itemize}[leftmargin=*,itemsep=2pt]
    \item Optimizer: AdamW with $\beta = (0.9, 0.95)$, $\epsilon = 10^{-8}$ (framework default)
    \item Weight decay: 0.1
    \item Learning rate: $3 \times 10^{-4}$ peak
    \item Schedule: linear warmup over $500$ optimizer updates, cosine decay
    \item Batch size: 64 (micro-batch 4, gradient accumulation 16)
    \item Gradient clipping: norm 1.0
    \item Precision: bfloat16
    \item Hardware: NVIDIA H200 GPU
    \item Logged micro-batch iterations: $32{,}000$
    \item Optimizer updates completed: $2000$
    \item Random seeds: 42, 123, 456
\end{itemize}

All training counters in this paper are reported in \emph{optimizer updates} unless a figure axis is explicitly labeled ``micro-batch iteration''; with gradient-accumulation factor $16$, one optimizer update corresponds to $16$ micro-batch iterations.

\paragraph{Measurement cadence and data order.}
Attention statistics are recorded every $10$ optimizer updates (and additionally at the first and last update), on the last micro-batch of the current update; layerwise gradient norms are recorded every $50$ optimizer updates. The FineWeb stream is consumed in a fixed order that does not depend on the torch seed, so all runs (both architectures, all seeds) see the same data in the same order; only initialization and dropout differ across seeds. This is why some spike updates coincide exactly across runs.

\paragraph{Data.}
Models are trained on FineWeb \citep{penedo2024fineweb}, a large-scale web text corpus. We use streaming data loading with packed sequences of length 1024.

\paragraph{Metrics.}
We track the following quantities over the logged trajectory:
\begin{itemize}[leftmargin=*,itemsep=2pt]
    \item Training loss (cross-entropy)
    \item Gradient norm (before clipping)
    \item Lower-bound certificate $\thetacert(p)$ computed by the greedy prefix procedure on sampled attention distributions, with exact enumeration used only in the small-$L$ exhaustive-check setting
    \item Row maximum attention weight $p_{\max}$ and normalized entropy $H_{\mathrm{norm}}$ on the same sampled rows
    \item Layer-wise gradient norms
\end{itemize}

\subsection{Numerical Checks and Large-$L$ Certificates}

\begin{figure}[t]
\centering
\includegraphics[width=0.85\columnwidth]{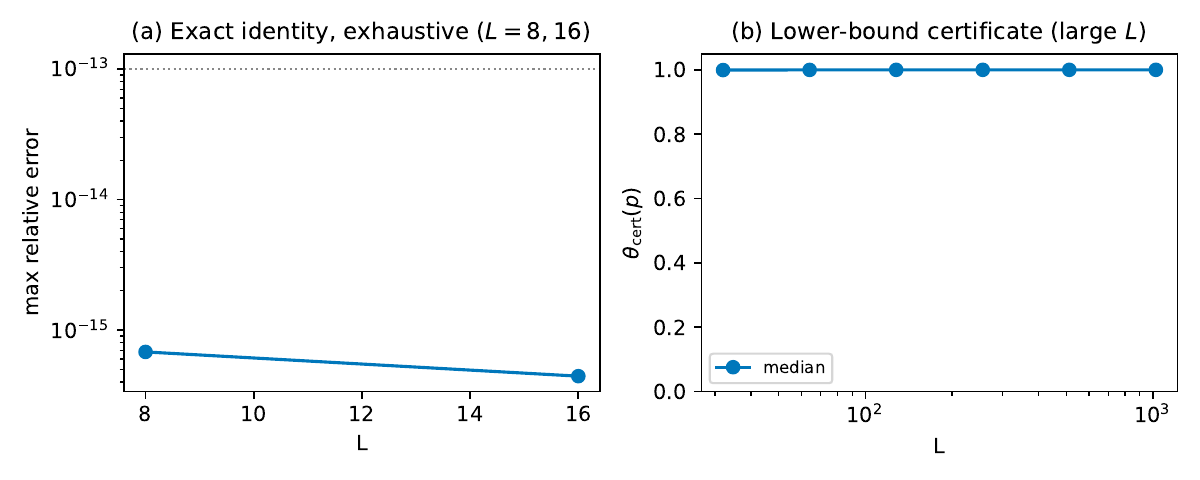}
\caption{Numerical check of Theorem~\ref{thm:softmax}. For $L \in \{8,16\}$, exhaustive enumeration confirms the identity $\|J_\tau(u)\|_{\infty \to 1} = \theta(p)/\tau$ holds to machine precision ($< 10^{-13}$ relative error). For larger $L$, the plotted quantity is the feasible-subset lower-bound certificate $\thetacert(p)$, computed across 500 random distributions per length; it is not an independent verification of the operator norm.}
\label{fig:theorem_verification_app}
\end{figure}

Figure~\ref{fig:theorem_verification_app} provides a numerical consistency check for Theorem~\ref{thm:softmax}. For small sequence lengths $L \in \{8, 16\}$, we perform an exhaustive numerical comparison by enumerating all $2^L$ subsets. For each of 500 random logit vectors $u \sim \mathcal{N}(0, I)$:
\begin{enumerate}[leftmargin=*,itemsep=2pt]
    \item Compute $p = \softmax(u/\tau)$ with $\tau = 1$
    \item Compute the left-hand side: construct the Jacobian $J = \Diag(p) - pp^\top$ and find $\|J\|_{\infty \to 1}$ by exhaustive search over all sign vectors $x \in \{\pm 1\}^L$
    \item Compute the right-hand side: $\theta(p)/\tau$ by exhaustive search over all subsets $S \subseteq [L]$
    \item Record the relative error $|\text{LHS} - \text{RHS}|/\text{RHS}$
\end{enumerate}

For the tested lengths $L\in\{8,16\}$, the maximum relative error across all tests is below $10^{-13}$, consistent with double-precision floating-point arithmetic. This numerical check is consistent with Theorem~\ref{thm:softmax}, whose proof establishes an exact equality rather than merely an upper bound.

For larger $L \in \{32, 64, 128, 256, 512, 1024\}$, exhaustive enumeration is computationally infeasible. Instead, we use a greedy procedure to construct a feasible subset and therefore a lower-bound certificate for $\theta(p)$. This is not an independent verification of the operator norm at large $L$.

\paragraph{Greedy certificate for $\theta(p)$.}
\label{app:theta}
For large $L$, we compute the certificate $\thetacert(p) \le \theta(p) = 4\max_{S \subseteq [L]} p(S)(1-p(S))$ as follows:
\begin{enumerate}[leftmargin=*,itemsep=2pt]
    \item Sort probabilities in descending order: $p_{(1)} \ge p_{(2)} \ge \cdots \ge p_{(L)}$
    \item Compute cumulative sums $c_k = \sum_{i=1}^k p_{(i)}$
    \item Find $k^* = \arg\min_k |c_k - 0.5|$
    \item Return $\thetacert(p) = 4 \cdot c_{k^*} \cdot (1 - c_{k^*})$
\end{enumerate}
The greedy prefix subset is feasible but \emph{not} optimal in general: e.g.\ for $p=(0.4,0.3,0.2,0.1)$ the greedy prefix is $\{1\}$ with $p(S)=0.4$ giving $\thetacert= 0.96$, while the true optimum is $\{1,4\}$ or $\{2,3\}$ with $p(S)=0.5$ giving $\theta=1$. (By Remark~\ref{rem:maxcut}, exact computation of $\theta$ is a \textsc{Partition}-type subset optimization, which is why we report certificates.) Since $\thetacert(p)\le\theta(p)\le 1$, a certificate close to one localizes the exact value: $\theta(p)\in[\thetacert(p),1]$. At $L\in\{8,16\}$, where exhaustive search is available, the greedy certificate is frequently strictly below the exact value on random Gaussian-logit distributions (over $500$ distributions per length, it matches the exact value on $17.8\%$ of instances at $L=8$ and $2.6\%$ at $L=16$, with worst-case gap $\theta-\thetacert\approx 0.11$); this is precisely why we report $\thetacert$ as a lower bound rather than an estimate. Theorem~\ref{thm:softmax} is proved analytically in Appendix~\ref{app:softmax}; the greedy procedure is used only to construct empirical lower bounds at large $L$.

\FloatBarrier
\subsection{Multi-Seed Results}

\begin{table}[t]
\centering
\caption{Early-training comparison across three seeds. Loss is averaged over the final $50$ optimizer updates; $\pm$ is the sample standard deviation across the three seeds (denominator $n-1$). Spike counts cover the logged trajectory and follow the rule defined in Section~\ref{sec:experiments}. Certificate entries are the last-checkpoint means across sampled layers and seeds of the per-layer medians. The table does not represent converged performance.}
\label{tab:multiseed_app}
\begin{tabular}{lccc}
\toprule
\textbf{Architecture} & \textbf{Mean Loss (final 50 updates)} & \textbf{Spike updates} & \textbf{Last-checkpoint $\thetacert/\tau$} \\
\midrule
Pre-LN & $4.509 \pm 0.005$ & $9.0 \pm 1.0$ & $0.99870$ \\
Post-LN & $6.771 \pm 1.551$ & $35.3 \pm 11.7$ & $0.99999$ \\
\bottomrule
\end{tabular}
\end{table}

Table~\ref{tab:multiseed_app} summarizes the three-seed results. Pre-LN losses are tightly clustered ($4.503$, $4.510$, $4.513$), whereas Post-LN outcomes split sharply by seed: one seed reaches $4.98$ and two plateau near $7.67$, so the mean alone obscures the seed-level behavior. Pre-LN has lower loss for all three matched seeds. Post-LN also registers roughly four times as many spike updates under the protocol rule. Its higher median certificate ($0.99999$ versus $0.99870$ for Pre-LN) reflects the near-uniform attention of the two plateaued seeds: their sampled-row maximum weights closely match the uniform-support baseline in these two summary statistics (median $0.0020$ vs.\ $0.0019$; mean $0.0041$ vs.\ $0.0041$). By contrast, Pre-LN develops substantially more concentrated attention while retaining a near-maximal median certificate.

\FloatBarrier
\subsection{Sharpening vs.\ Bisectability Trajectories}
\label{app:theta-traj}

Figure~\ref{fig:theta_trajectory_app} reports the sharpening-vs-certificate trajectories that motivate the body claim of Section~\ref{sec:experiments}: attention concentration changes substantially over the logged trajectory (normalized entropy $0.987\!\to\!0.726$; mean row maximum weight $0.010\!\to\!0.174$), while the median certificate $\thetacert(p)/\tau$ remains near one at every logged checkpoint (last-checkpoint mean of the per-layer medians, $0.9987$). The percentile envelope widens once sharpening begins at the end of warmup, reaching $[0.26, 1.00]$ at the last checkpoint (minimum per-layer 5th percentile to maximum per-layer 95th percentile over layers and seeds; the lower edge is driven by the middle sampled layer).

Because each per-layer median certificate remains near one, at least half of the sampled rows within each layer have exact $\theta$ at least as large as that median. The lower envelope identifies rows with smaller certificates, but a small greedy certificate alone does not upper-bound the exact $\theta$. For rows with $p_{\max}\ge1/2$, Corollary~\ref{cor:regimes} instead gives the exact dominant-atom identity
\[
\theta(p)=4p_{\max}(1-p_{\max}),
\]
which directly characterizes the lower sensitivity of the strongly concentrated rows. The corresponding layerwise breakdown (Figure~\ref{fig:theta_layerwise_app}) localizes the concentrated rows: medians are near one at all three sampled depths, and the widened lower whisker appears at the middle layer.

\begin{figure}[h]
\centering
\includegraphics[width=0.88\columnwidth]{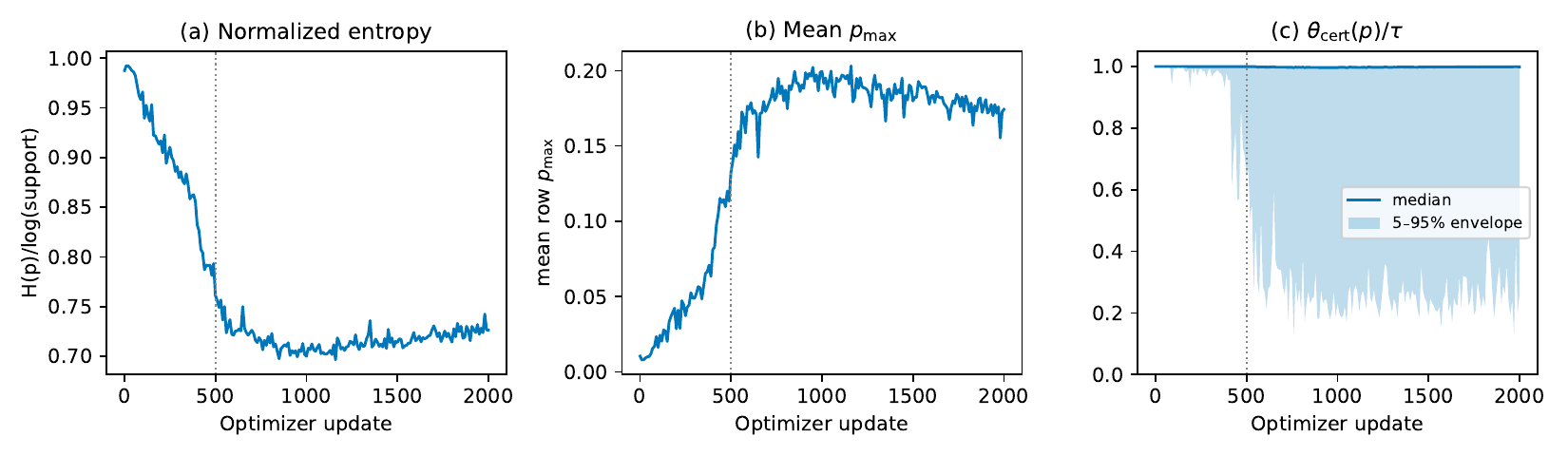}
\caption{Attention concentration changes while the median lower-bound certificate remains high. Across $3$ seeds on $774$M Pre-LN over the logged early-training trajectory: normalized entropy decreases ($0.987\!\to\!0.726$) and mean row maximum weight increases ($0.010\!\to\!0.174$), while the certificate stays near one at every logged checkpoint. The certificate curve is the mean across sampled layers and seeds of the per-layer medians (last checkpoint $0.9987$); the shaded envelope runs from the minimum per-layer 5th percentile to the maximum per-layer 95th percentile across layers and seeds ($[0.26,1.00]$ at the last checkpoint). Entropy and $p_{\max}$ curves are the mean across sampled layers of the per-layer sampled-row mean, averaged over seeds; the dotted vertical line marks the end of warmup at update $500$. The widening envelope shows that a minority of rows develops substantially smaller certificates; the dominant-atom formula in Corollary~\ref{cor:regimes} characterizes exact sensitivity for rows with $p_{\max}\ge1/2$.}
\label{fig:theta_trajectory_app}
\end{figure}

\begin{figure}[h]
\centering
\includegraphics[width=0.85\columnwidth]{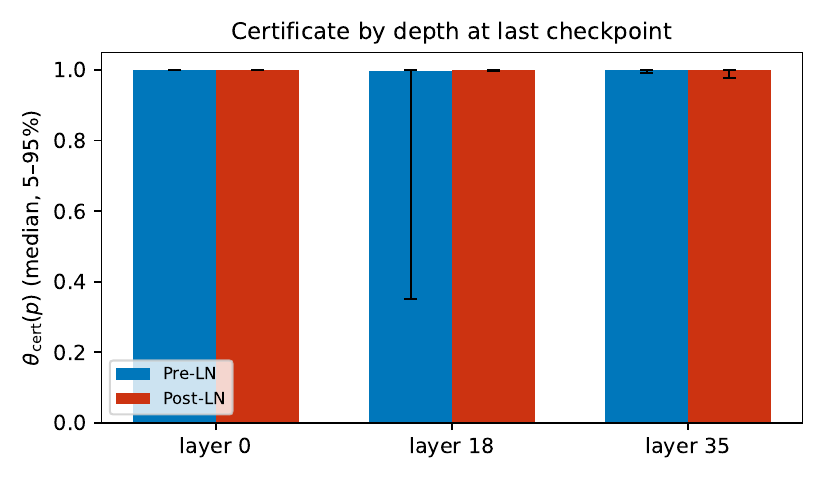}
\caption{Certificate $\thetacert(p)$ at layers $0$, $18$, $35$ at the last logged checkpoint, for a single seed (seed~$123$). Bars show that layer's sampled-row median; whiskers span that layer's sampled-row 5th and 95th percentiles. Medians are near one at all three sampled depths for both architectures, so the near-one median certificate is not confined to the final layer; the Pre-LN middle layer shows the widened lower whisker produced by its concentrated minority of rows (down to $\approx 0.35$). Temporal behavior over the logged trajectory is shown in Figure~\ref{fig:theta_trajectory_app}.}
\label{fig:theta_layerwise_app}
\end{figure}

\FloatBarrier
\section{Extended Related Work}
\label{app:related-extended}

\paragraph{Lipschitz analysis of attention.}
\citet{kim2021lipschitz} established that unconstrained self-attention is not globally Lipschitz, motivating spectral regularization approaches \citep{dasoulas2021lipschitznorm,qi2023lipsformer}. Recent work derives tighter local bounds by incorporating attention-distribution properties: \citet{yudin2025localLipschitz} bound the spectral norm using ordinal statistics of attention probabilities, while \citet{castin2024smooth} study the Lipschitz properties of self-attention and their dependence on sequence length. Separately, \citet{nair2025softmax} proves the tight global constant $1/2$ for softmax uniformly across all same-norm $\ell_p\to\ell_p$ geometries. Our result differs from these lines in both the induced norm and the level of analysis: it is an exact, distribution-dependent characterization of the local $\ell_\infty\to\ell_1$ (cross-norm) Jacobian norm (Theorem~\ref{thm:softmax}), propagated through complete pre-LN and post-LN layers. The same-norm global constants and the cross-norm distribution-dependent identity concern different induced norms and are not directly ordered. Remark~\ref{rem:maxcut} records the classical combinatorial content of the identity (a scaled weighted \textsc{Max-Cut} value of a rank-one-weighted graph); the contribution is its interpretation for attention and its layerwise propagation.

\paragraph{Comparison with Edelman et al.\ (2022) in detail.}
\citet{edelman2022inductive} use a max-row norm as a sub-step inside generalization-bound covering-number arguments. Their Lemma~A.14 bounds an input-Lipschitz constant for a single attention head with fixed weights, combining $W^Q$ and $W^K$ into a single spectral factor $\|W_K W_Q^\top\|$ and not including the output projection $W^O$ as an explicit factor. Our Theorem~\ref{thm:mha} retains the four projections separately and thereby distinguishes the lower-degree value pathway from the softmax-mediated attention-weight pathway of a complete MHA block. We do not infer a universal depth-scaling exponent from this decomposition. The two results have different goals (generalization vs.\ training stability) and different downstream consequences.

\paragraph{Normalization and deep transformers.}
The original transformer \citep{vaswani2017attention} used post-LayerNorm; \citet{xiong2020layer} showed pre-LN dramatically improves stability for deep networks; \citet{liu2020understanding} connected this to gradient flow. We provide a narrower structural result: a pre-LN residual sublayer contains an additive identity term, whereas a post-LN residual sublayer does not (Theorems~\ref{thm:preln-gradient}--\ref{thm:postln-gradient}), and Corollary~\ref{cor:ln-sequence} gives a sufficient condition under which the LayerNorm-only term in the post-LN expansion contracts geometrically at the sequence level. Neither result establishes contraction of the complete post-LN Jacobian.

\paragraph{Residual scaling and initialization.}
DeepNorm \citep{wang2022deepnet} combines a residual multiplier with asymmetric, depth-dependent initialization to stabilize very deep post-LN transformers. Related approaches include Fixup \citep{zhang2019fixup} and T-Fixup \citep{huang2020improving}. We discuss DeepNorm only as related motivation for studying projection-norm pathways; its prescription and exponent are not derived by the present analysis.

\paragraph{Initialization-time properties of attention.}
\citet{noci2022signal,naitsaada2025attention} document that attention distributions are near-uniform at random initialization, which supports our claim that $\theta(p)\approx 1$ at initialization for rows with sufficiently large causal support.

\paragraph{Attention sinks.}
Attention-sink behavior \citep{xiao2024efficient} motivates asking when concentration on one token reduces the exact sensitivity factor. Corollary~\ref{cor:regimes} shows that if a token carries mass at least $1/2$, then $\theta(p)=4p_{\max}(1-p_{\max})$, while in general $\theta(p)\ge 1-p_{\max}^2$. Moderate concentration can therefore coexist with large $\ell_\infty\to\ell_1$ softmax sensitivity; $\theta$ becomes small only as the distribution approaches a dominant one-hot state.

\end{document}